\title{Exemplar-Derived Soft Prompts for Efficient and Effective Domain Adaptation}
\author{%
  Abhinav Jain\thanks{Equal contribution.} \\
  Department of Computer Science\\
  Rice University\\
  \texttt{aj70@rice.edu} \\
  \And
  Xinyu Yao\textsuperscript{*} \\
  Department of Computer Science\\
  Rice University\\
  \texttt{xy38@rice.edu} \\
  \AND
  Thomas Reps \\
  Department of Computer Science\\
  University of Wisconsin--Madison\\
  \texttt{reps@cs.wisc.edu} \\
  \And
  Christopher Jermaine \\
  Department of Computer Science\\
  Rice University\\
  \texttt{cmj4@rice.edu} \\
}
\newcolumntype{P}[1]{>{\centering\arraybackslash}p{#1}}  
\newcommand{\onetagright}{\tagsleft@false}
\theoremstyle{plain}
\theoremstyle{definition}
\theoremstyle{remark}
\def\eqref#1{equation~\ref{#1}}
\def\1{\bm{1}}
\def\vk{{\bm{k}}}
\def\vq{{\bm{q}}}
\def\vv{{\bm{v}}}
\def\vz{{\bm{z}}}
\def\mK{{\bm{K}}}
\def\mV{{\bm{V}}}
\def\mW{{\bm{W}}}
\def\mZ{{\bm{Z}}}
\DeclareMathAlphabet{\mathsfit}{\encodingdefault}{\sfdefault}{m}{sl}
\SetMathAlphabet{\mathsfit}{bold}{\encodingdefault}{\sfdefault}{bx}{n}
\def\sD{{\mathbb{D}}}
\newcommand{\softmax}{\mathrm{softmax}}
\DeclareMathOperator*{\argmax}{arg\,max}
\newcommand{\approach}{\textsc{MHA-ESP}\;}
\newcommand{\approachB}{(\textsc{MHA-ESP})}
\newcommand{\xrag}{\textsc{U-ESP}\;}
\newcommand{\xragk}{\textsc{P-ESP}\;}
\newcommand{\pt}{\textsc{PT}\;}
\newcommand{\idpg}{\textsc{IDPG}\;}
\begin{document}

\maketitle

\begin{abstract}
Adapting foundation models to new domains is challenging and computationally expensive. While parameter-efficient fine-tuning (PEFT) methods allow models to acquire domain-specific skills, they require updating deployed models and incur additional training and deployment overhead. In contrast, In-context Learning (ICL) avoids model updates and improves over off-the-shelf models by leveraging similar exemplars, but it often fails to achieve competitive accuracy on specialised tasks. Motivated by the success of in-context exemplars and the need for fine-tuning-level adaptation, we propose Multi-Head Attention-based Exemplar Soft Prompting \approachB, which uses an attention mechanism to learn soft prompts from retrieved exemplars, with multiple attention heads controlling prompt generation. Across multiple benchmarks and model scales, \approach performs on par with Low-Rank Adaptation (LoRA), outperforms standard ICL by an average of 18.85 points, and reduces inference cost by up to 10$\times$GFLOPs, enabling efficient, high-accuracy domain adaptation without updating the foundation model.
\end{abstract}

\section{Introduction}
\label{sec:introduction}
Fine-tuning is the standard approach for adapting foundation models to a domain by acquiring the skills needed to solve a specific task. In practice, foundation models are often adapted via Parameter-Efficient Fine-Tuning (PEFT), such as LoRA \cite{hu2022lora}, which updates only a small subset of the model’s weights to save training cost while preserving performance \cite{han2024parameter}. However, both full fine-tuning and PEFT have two key drawbacks. First, they require modifying the foundation model itself: in multi-tenant settings where a single backbone must serve users with distinct tasks (Tasks A, B, and C in Figure~\ref{fig:mha-esp-intro}), the server is forced to maintain and route to a separate adapter for every user-task pair \cite{wen2023batched}. Second, the upfront training cost remains high: adapting a massive model demands substantial computation resources and data, an expense that grows with model size \cite{hu2022lora}. 

As an alternative, In-Context Learning (ICL) has emerged as a training-free  ``plug-and-play'' adaptation strategy \cite{ xie2021explanation, min2022rethinking}. Rather than updating model parameters through gradient descent, ICL conditions the model at inference time by providing a small set of task-specific exemplars directly in the input prompt, as shown in Figure \ref{fig:mha-esp-intro}.  Pioneering work such as \cite{brown2020language, wei2022chain} demonstrated that models exhibit strong in-context learning when presented with exemplars in natural-language form. Remarkably, in certain settings, ICL has even outperformed fine-tuning-based methods  \cite{mallen2022not,mosbach2023few, pingua2025medical}.

However, ICL does not excel in all tasks. It performs poorly in tasks with out-of-distribution data where the model must learn a new skill \cite{yu2024evaluation, gupta2024comprehensive}. We found this to be a core limitation of ICL in our experiments as well: simply providing a few exemplars in context as text may not allow the model to acquire the required skill, especially when the skill requires reasoning beyond the data in its original training distribution.


Our goal is to develop a new exemplar-based method that has the advantages of ICL (no model modification, low upfront cost) but also performs well on skills or tasks for which the model was not trained. We investigate a new method called \emph{Multi-Head Attention Exemplar Soft Prompt} \approachB. The idea behind it is to efficiently translate the exemplars into a soft-prompt-based representation, a format the frozen foundation model can more readily exploit. By paying a small upfront training cost, it learns an auxiliary model that produces such task-conditioned embeddings and prepends them to the query.   
The approach preserves the deployment simplicity of ICL —no extra server-side parameters need to be maintained, as demonstrated in Figure \ref{fig:mha-esp-intro}, since the soft prompt is built before it is sent to the foundation model—and introduces only a lightweight training overhead for the soft-prompt encoder on the user side.  

A key insight from our findings is that learning soft prompts from in-context exemplars is crucial to effective adaptation. In other words, simply learning a static soft prompt \cite{liu2021p,lester2021power,liu2024gpt} for a new task is not enough, which we find incurs a noticeable performance degradation.  Across a wide range of foundation models, specialised tasks (e.g., molecular design, captioning, QA), and domains (e.g., chemistry, economics, math), our experiments show that the proposed exemplar-conditioned prompting consistently outperforms both the standard text-based retrieval and prompt-tuning baselines and closes the gap with fine-tuning baselines like LoRA. Furthermore, \approach's design provides (i) gains in inference efficiency by reducing quadratic token overhead, (ii) stable and consistent results with the property of exemplar order-invariance embodied in the design.

\begin{figure*}[t]
    \centering
    \includegraphics[width=1.0\linewidth]{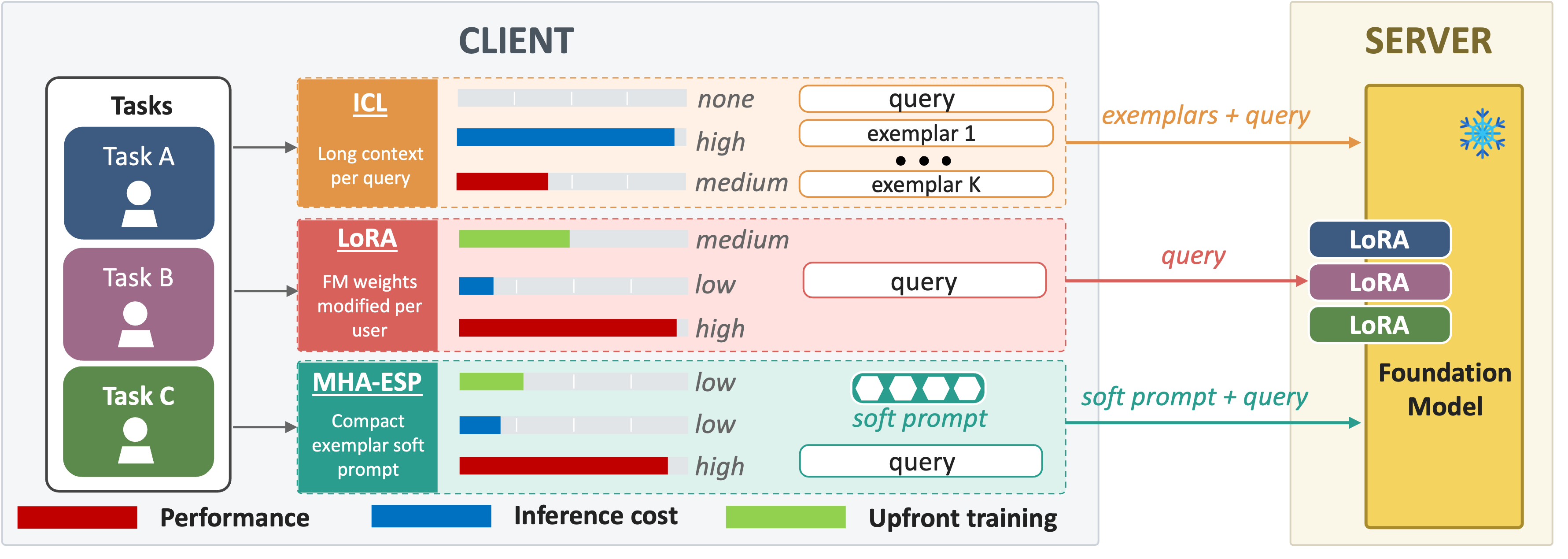}
    \caption{Comparison of domain adaptation strategies. ICL sends exemplars with query, incurring a high inference cost. LoRA modifies foundation model weights and requires per-user adapters on the SERVER side. \approach (ours) encodes exemplars into a compact soft prompt, achieving high performance with low inference cost while keeping trainable parameters locally on the USER side.}
    \label{fig:mha-esp-intro}
\end{figure*}

In this work, we make the following contributions:
\begin{itemize}
    \item We address fundamental limitations in ICL, which include unsatisfactory performance, high inference cost, and exemplar-order variance in domain adaptations.
    \item We propose an exemplar-order-invariant multi-head-attention architecture for soft-prompt generation in \textsc{MHA-ESP}, controlled by varying the attention heads.
    \item Our experiments show on-par performance with LoRA and an average 19-point gain over ICL across benchmarks and models, while cutting inference cost by $10\times$ in GFLOPs.
\end{itemize}

\section{Related Works}
\label{sec:related_work}
\textbf{Domain Adaptation via In-Context Learning.} In-Context Learning (ICL) has emerged as the most widely used training-free approach for adapting foundation models to new domains. Instead of updating model parameters, ICL conditions the model at inference time on a small set of exemplars that demonstrate the target domain \cite{brown2020language,xie2021explanation,min2022rethinking}. This paradigm has demonstrated strong effectiveness across settings: in chain-of-thought (CoT) prompting where exemplars annotated with intermediate reasoning steps elicit multi-step reasoning behaviour \cite{wei2022chain,kojima2022zerocot}; in retrieval-augmented generation for grounding in external knowledge \cite{lewis2020retrieval,guu2020retrieval, borgeaud2022improving}; and in agent-based frameworks \cite{yao2023react, shinn2023reflexion, schick2023toolformer}. Across these settings, ICL consistently improves over zero-shot prompting without modifying model weights.

Despite its advantages, ICL faces three fundamental limitations. First, because attention scales quadratically with sequence length \cite{vaswani2017attention}, appending more exemplars or retrieving context into the prompt incurs substantial inference cost. Second, ICL exhibits exemplar-order variance; permuting the input demonstrations can shift accuracy by more than 10 points on standard classification benchmarks \cite{lu2022fantastically,pezeshkpour2023large,zheng2023large}. Third, ICL still underperforms on out-of-distribution or skill-acquisition domains where exemplars themselves failed to surface semantically sufficient context \cite{yu2024evaluation, gupta2024comprehensive,joren2024sufficient}. In contrast, \approach with a small upfront training cost retains the adaptation signal of in-context exemplars but encodes them into a small, order-invariant set of soft-prompt vectors, reducing effective context length by an order of magnitude while delivering better performance than text-based ICL.

\textbf{Domain Adaptation via Fine-Tuning.} When training data and computational resources are available, fine-tuning typically yields the strongest domain-adaptation performance. However, full fine-tuning is prohibitive in both memory and compute \cite{hu2022lora}. Parameter-efficient fine-tuning methods have emerged to address this by freezing the backbone and updating a subset of parameters, generally referred to as the adapter modules \cite{houlsby2019adapter,pfeiffer2021adapterfusion, li2021prefix, he2021towards, hu2022lora, zhang2023adalora}. Among these, LoRA \cite{hu2022lora} has become the standard for domain adaptation.

However, a practical limitation is that the adaptation resides \textit{on the server side}: the serving infrastructure must store and load a separate adapter for each domain, leading to significant memory and management overhead when scaling to many heterogeneous domains. Recent work has therefore focused on multi-tenant LoRA serving: FLoRA \cite{wen2024flora} lets each example in a minibatch carry its own low-rank weights; S-LoRA \cite{sheng2024slora} servers thousands of concurrent adapters via unified paging; dLoRA \cite{wu2024dlora} dynamically merges/unmerges adapters with the base model and migrates requests across replicas. These systems reduce serving costs but do not address the underlying constraint of the foundation model provider hosting the adapters. \approach addresses this by shifting adaptation to the user side, where each domain-specific request trains a lightweight attention-based module (a few million parameters) to generate soft prompts. The resulting soft prompt is prepended to each query at inference time, eliminating the need for maintaining per-domain adapters.

\textbf{Domain Adaptation via Soft Prompting.} Soft-prompt methods occupy a middle ground: like PEFT, they have a small training overhead, but like ICL, they operate purely at the input level, leaving model weights and the serving stack untouched. Prompt tuning \cite{lester2021power} and P-Tuning \cite{liu2021p} showed that prepending a few trainable vectors suffices to adapt very large models, making them well-suited for the multi-tenant setting \cite{jain2024prompt}. Prior work either learns a static prompt per domain \cite{lester2021power, liu2024gpt}, which lacks instance-specificity, or conditions soft prompts on individual queries \cite{wu2022idpg, jain2024prompt}.

However, even these adaptive methods overlook the rich signal provided by similar exemplars—the key driver behind the success of ICL. \approach is explicitly designed to recover this signal directly: it retrieves exemplars relevant to the current query and learns to encode them into a query-specific soft prompt. By leveraging multi-head attention, \approach enables structured interactions between the query and retrieved exemplars, while maintaining order invariance in exemplar aggregation. In addition, the number of attention heads provides a simple and interpretable hyperparameter that controls the expressiveness of the resulting soft prompts, allowing the method to be flexibly applied across tasks and foundation models.
\section{Methodology}
\label{sec:methodology}
The objective of domain adaptation is to adapt a foundation model $f_{\theta}$ to a new task with data $\sD=\{(x_i,y_i)\}_{i=1}^N$ such that $y=f_\theta(x)$. It formally involves updating the parameters of the model $\theta$ to maximize the likelihood of the response $y$, i.e., $\max_\theta \mathbb{E}_{(x,y)\sim \sD}[\log p_\theta(y|x)]$. 

\textbf{Domain adaptation with in-context exemplars.} 
In domains where parametric updates are not feasible, either due to limited training data or computationally expensive training, adaptation can be achieved by providing a small set of domain-specific samples $\{(x_k, y_k)\}_{k=1}^K\subset \sD$ as in-context exemplars. In this setting, the model prediction takes the form $y=f_\theta(\{(x_k, y_k)\}_{k=1}^K, x)$. This adaptation can be further enhanced by leveraging a relevance or similarity function $\Phi(x, x')$. Specifically, for each input $x$, the top-$K$ most relevant samples in $\sD$ are retrieved as in-context exemplars---i.e., $(x_k, y_k) = \argmax_{(x', y')\in \sD\setminus\{(x,y), (x_1,y_1),\ldots,(x_{k-1},y_{k-1})\}} \Phi(x, x')$, for $1 \leq k \leq K$.

\textbf{Representing in-context exemplars.} While approaches based on ICL typically represent exemplars in text form, we instead consider soft prompts as an alternative representation. To this end, we define an encoding function $g(\sD_K|x)=\mZ$ that embeds $\sD_K$, the top-$K$ in-domain samples retrieved for a given $x$, into a soft prompt $\mZ\in\mathbb{R}^{d\times m}$, where $m$ denotes this virtual prompt's length and $d$ is the embedding dimension of the foundation model. Similar to prompt tuning, the learned soft prompt is prepended to the word embeddings of $x$, thereby enabling adaptation without updating $\theta$.

\subsection{Multi-Headed Attention Exemplar Soft Prompt}
\label{subsec:proposed formulation}
In this paper, we propose a multi-headed attention-based function for encoding exemplars into soft prompts \approachB, where each head projects the top-$K$ in-context exemplars into the embedding space of the language model as a single soft-prompt vector $\vz\in\mathcal{R}^{d}$. In particular, performing attention with $H$ heads results in the soft prompt $\mZ_{\textit{MHA}}=[\vz^{(1)},\ldots \vz^{(i)}\ldots,\vz^{(H)}]$ of length $m=H$, where the output from the $i^{\textit{th}}$ head is
\begin{align*}
    \vz^{(i)} &= \textsc{Attention}(\vq^{i}, \textbf{K}^{i}, \textbf{V}^{i}) = \softmax(\frac{\vq^{i}\mK^{i}}{\sqrt{d}})\mV^{i};
\end{align*}
with $\mK^{i}=[\vk^{i}_1,..\vk^{i}_k..,\vk^{i}_K] \; \text{and} \; \mV^{i}=[\vv^{i}_1,..\vv^{i}_k..,\vv^{i}_K]$, where $\vq^{i}=\mW^{Q}_{i}E_x$ is the attention query corresponding to a given $x$, $\vk^{i}_k=\mW^{K}_{i}E_{x_k\oplus y_k}$ and $\vv^{i}_k=\mW^{V}_{i}E_{x_k\oplus y_k}$ are the keys and values corresponding to $(x_k,y_k)\in\sD_K$, respectively. $E_x\in\mathcal{R}^{d'}$ represents the dense representation of $x$ obtained by a sentence-embedding model with hidden dimension $d'$;
$\oplus$ denotes the concatenation operator; and
$\mW^{Q}_{i}\in\mathcal{R}^{d\times d'}$, $\mW^{K}_{i}\in\mathcal{R}^{d\times d'}$, and $\mW^{V}_{i}\in\mathcal{R}^{d\times d'}$ represent the attention query, key, and value weights associated with head $i$, respectively.
Overall, we optimize the following objective:
\begin{equation*}
    \max_{\varphi}\mathbb{E}_{(x,y)\sim \sD, \, \sD_K=\Phi(x,\cdot)}\big[\log p(y|g_{\varphi}(\sD_K|x), x)\big],
\end{equation*}
such that $g_{\varphi}(\sD_K|x)=\mZ_{\textit{MHA}}$, where $\varphi$ represents the parameters of the encoding function $g$.

\textbf{Lower inference cost.} Prior work \cite{cheng2024xrag} interprets the encoding function $g(\cdot)$ as a context-compression module achieving a compression ratio of $\frac{|\sD_K|}{m}$. It reduces the tokenised length of the context $\sD_K$ from $|\sD_K|$ to $m$, thereby lowering overall inference cost during deployment. ICL without any compression yields a compression ratio of $1$. In contrast, \approach  achieves a higher and configurable compression ratio of $\frac{|\sD_K|}{H}$ ($m=H$). Configurability comes from tailoring the soft-prompt size to the domain by varying the number of heads to adjust its representational capacity.  

\textbf{Order Invariance.} We argue that the soft prompts generated by \approach are invariant to the order of retrieved exemplars. This property follows from the use of scaled dot-product attention, which depends only on the set of input tokens rather than their order \cite{vaswani2017attention}. Specifically, the inputs are exemplar embeddings $[E_{x_1 \oplus y_1}, \dots, E_{x_K \oplus y_K}]$ without positional encoding; therefore, permuting them does not alter the aggregated representation produced by each head. As a result, the soft prompt remains identical regardless of exemplar ordering. A formal proof is given in Appendix \ref{sec:order_invariance}.
\section{Experiments, Results, and Discussion}
\label{sec:experiment}

\subsection{Experimental Setup}
\label{sec:experimental_setup}
\begin{table*}[h]
    \centering
    \caption{Comparison with ICL and LoRA across benchmarks. Hyperparameters are tuned via sweeps: \approach ($H\in \{1, 2, 4, 8\}$). \underline{Underlined} indicates the best performance.  \textbf{Bold} indicates the best improvement relative to LoRA for a given foundation model. 
    }
    \begin{tabular}{lccccccc}
        \toprule
        \textbf{Methods} & \textit{BACE} & \textit{BBBP} & \textit{ClinTox} & \textit{Design} & \textit{Captioning} & \textit{MMLU-Pro} & $\Delta_{\text{LoRA}}^{\text{avg}}$\\
        \hline
        \rowcolor{gray!20}
        \multicolumn{8}{c}{\centering{\textbf{\textit{Qwen3-4B}}}} \\
        \hline
         LoRA & 12.32 & 42.84 & 81.00 & \underline{81.31} & \underline{31.74}  & \underline{50.68} & - \\
         Off-the-Shelf & 0.0 & 10.88 & 0.0  & 37.75 & 0.82  & 35.22 & -42.18\\
         ICL(fixed) & 53.09 & 33.93 & 9.72 & 71.51  & 5.63 & 36.82 & -22.82\\
         ICL & 75.87 & 69.33 & 44.24 & 71.66 & 5.67 & 38.24 & -4.36\\
         \approach & \underline{76.27} & \underline{82.49} & \underline{96.32} & 73.63 & 23.59 & 45.96 & \textbf{13.46}\\
         \hline
         \rowcolor{gray!20}
         \multicolumn{8}{c}{\centering{\textbf{\textit{Llama3.2-3B-Instruct}}}} \\
         \hline
          LoRA & 0.0 & 54.77 & 93.06  & 80.63 & 31.82 & \underline{42.09} & - \\
         Off-the-Shelf & 0.0 & 0.0 & 0.0 & 17.38 & 0.0  & 25.45 & -56.74\\
         ICL(fixed) & 48.86 & 36.39 & 36.23 & 65.71 & 4.74 & 28.67 & -19.03\\
         ICL & 49.89 & 46.15 & 37.90 & 65.44 & 5.72 & 28.94 & -16.69\\
         \approach & \underline{67.62} & \underline{87.61} & \underline{94.44} & \underline{80.73} & \underline{32.79} & 36.43 & \textbf{13.63}\\
         \hline
         \rowcolor{gray!20}
         \multicolumn{8}{c}{\centering{\textbf{\textit{Mistral-7B-Instruct}}}} \\
         \hline
          LoRA & 24.72 & 85.24 & 78.32 & \underline{85.22} & \underline{36.98} & 26.13 & - \\
         Off-the-Shelf & 18.05 & 0.0 & 0.0 & 48.12 & 0.0 & 25.50 & -52.31\\
         ICL(fixed) & 58.63 & 42.33 & 34.01 & 71.11 & 3.51 & 26.55 & -20.93\\
         ICL & \underline{77.65} &  66.62 & 59.79 & 71.11 & 3.53 & 26.48 & -8.64 \\
         \approach & 76.26 & \underline{87.01} & \underline{94.42} & 78.11 & 34.54 & \underline{35.17} & \textbf{9.98}\\
         \bottomrule
    \end{tabular}
    \label{tab:baseline_domain_adaptation}
\end{table*}
\textbf{Benchmarks.} In this work, we focus on domains or tasks where foundation models exhibit weak zero-shot performance, but can benefit from in-context exemplars. Such exemplars provide cues in the form of analogies to similar samples, domain-specific formats (e.g., SMILES strings for molecules), specialized vocabulary, or templates for step-by-step reasoning. We therefore benchmark across two groups of tasks. The first group involves limited-data molecular-property-prediction tasks: (a) \textit{BACE} [train/test split: 1,413/100]---binary classification of whether a molecule inhibits BACE1; (b) \textit{BBBP} [train/test split: 1,950/100]---prediction of blood–brain barrier penetration (Yes/No); (c) \textit{ClinTox} [train/test split: 1,384/100]---classification of molecules as clinically trial toxic vs. non-toxic \cite{guo2023can}. The second group consists of medium-scale tasks on molecular reasoning and general language understanding: (d) \textit{ChEBI-Design} [train/test split: 26,407/100]---generating new molecules from their descriptions and (e) \textit{Captioning} [train/test split: 26,407/100]---generating text-based description that describes the molecule \cite{edwards2022translation,edwards2021text2mol}. (f) \textit{MMLU-Pro} [train/test split: 6,011/6,019] -- enhanced multi-task language understanding across 14 disciplines \cite{wang2024mmlu}.  Additional performance comparison on the math-reasoning benchmark can be found in Appendix \ref{sec:math_benchmark}.

\textbf{Performance Metrics.} For yes/no classification tasks, we report the \textit{geometric mean} of the True-Positive and True-Negative rates, as it avoids the biases of accuracy or F1 under skewed label distributions. We call this quantity the ``\textit{Effective Accuracy}''. For ChEBI-Design, we report Molecular ACCess System (\textit{MACCS}), which measures similarity over molecular structural fingerprints; for Captioning, we report \textit{BLEU-4} to measure phrase-level precision---critical for chemically accurate descriptors. For MMLU-Pro, we report \textit{accuracy} to measure whether the model selects the correct option. To quantify improvement of a method relative to any given baseline, we compute $\Delta_{\text{baseline}}^{\text{avg}}=\sqrt[\leftroot{-2}\uproot{2}n]{\prod_{i=1}^n(100+o_i^{method}-o_i^{baseline})}  - 100$ where $o_i$ is the score of the method on task $i$, and $n$ is the total number of benchmark tasks. 

\textbf{Models.} We evaluate three families of foundation models ($\theta$) with various sizes: the \textit{Qwen3-4B} \cite{yang2025qwen3}, \textit{Llama-3.2-3B-Instruct}, and \textit{Mistral-7B-Instruct-v0.3} \cite{jiang_mistral_2023}. For smaller FM, refer to Appendix \ref{sec:qwen3_0.6b}. To encode in-context exemplars ($E_x$), we employ domain-specific embedding models like \textit{ChemBERTa2} \cite{ahmad2022chemberta} for property prediction tasks and \textit{Qwen3-Embedding-0.6B}  \cite{zhang2025qwen3} for encoding natural language.

\textbf{Retrieval Functions ($\Phi$).} We assume that a domain-specific retrieval function is provided. For molecular benchmarks, we retrieve the top-$K$ relevant exemplars using Tanimoto Similarity \cite{tanimoto1958elementary, guo2023can}, which computes scaffold-level similarity between SMILES representations. For others, retrieval is performed via cosine similarity. Additional experimental details can be found in Appendix \ref{sec:hyperParamSearch}.

\subsection{Comparison with Domain Adaptation Baselines}
\begin{table*}[t]
    \centering
    \caption{Comparison with  soft-prompt-based baselines across benchmarks. Hyperparameters are tuned via sweeps: Prompt Tuning (virtual tokens $\in \{1, 5, 10\}$), \approach ($H \in \{1, 2, 4, 8\}$). Results are averaged over $3$ random seeds. \underline{Underlined} indicates the best performance. \textbf{Bold} indicates the best average performance across benchmark tasks.}
    \begin{tabular}{lccccccc}
        \toprule
        \textbf{Methods} & \textit{BACE} & \textit{BBBP} & \textit{ClinTox} & \textit{Design} & \textit{Captioning} & \textit{MMLU-Pro} & \textit{Average}\\
        \hline
        \rowcolor{gray!20}
        \multicolumn{8}{c}{\centering{\textbf{\textit{Qwen3-4B}}}} \\
        \hline
         PT & 0.0 & 11.10 & 0.0 & 39.96 & 1.13 & 35.27 & 14.58 \\
         IDPG & 62.54 & \underline{87.14} & 94.36 & 72.81 & 17.89 & 45.66 & 63.4\\
         \xrag & 45.50 & 68.69 & 72.54 & 67.89 & 15.71 & 44.60 & 52.49 \\
         \xragk & 60.77 & 73.13 & 53.22 & 70.24 & 14.61 & 44.63 & 52.77\\
         \approach & \underline{76.27} & 82.49 & \underline{96.32} & \underline{73.63} & \underline{23.59} & \underline{45.96} & \textbf{66.38}\\
         \hline
         \rowcolor{gray!20}
         \multicolumn{8}{c}{\centering{\textbf{\textit{Llama3.2-3B-Instruct}}}} \\
         \hline
         PT & 13.61 & 36.89 & 61.88 & 38.24 & 3.64 & 14.12 & 28.06\\
         IDPG & 35.28 & 87.14 & \underline{94.87} & 74.39 & 24.48 & 35.58 & 58.62\\
         \xrag & 33.31 & 71.40 & 68.22 & 71.31 & 20.84 & 36.06 & 50.19\\
         \xragk & 47.78 & 71.63 & 78.94 & 72.89 & 20.60 & 11.93 & 50.63\\
         \approach & \underline{67.62} & \underline{87.61} & 94.44 & \underline{80.73} & \underline{32.79} & \underline{36.43} & \textbf{66.60}\\
         \hline
         \rowcolor{gray!20}
         \multicolumn{8}{c}{\centering{\textbf{\textit{Mistral-7B-Instruct}}}} \\
         \hline
         PT & 54.97 & 33.97 & 0.0 & 33.88 & 5.10 & 22.34 & 25.04\\
         IDPG & 56.89 & 85.14 & 94.36 & 65.47 & 19.40 & 33.62 & 59.15\\
         \xrag & 71.43 & 69.46 & 74.35 & 67.69 & 18.93 & 33.32 & 55.86\\
         \xragk & 65.46 & 69.51 & 81.02 & 71.11 & 19.58 & 33.47 & 56.69\\
         \approach & \underline{76.26} & \underline{87.01} & \underline{94.42} & \underline{78.11} & \underline{34.54} & \underline{35.17} & \textbf{67.59}\\
         \bottomrule
    \end{tabular}
    \label{tab:baseline_soft_prompt}
\end{table*}
\textbf{Baselines.} Since our goal is to achieve the performance of adapter-style methods with the scalability of in-context learning (ICL), we include \textit{LoRA} \cite{hu2022lora} as a representative PEFT baseline that adapts the model via server-side updates for each user request. In addition, we evaluate two representative variants of ICL: \textit{fixed ICL}, where a shared set of exemplars is used across all queries, and \textit{adaptive ICL}, where relevant exemplars are retrieved per query from the training set. For brevity, we refer to \textit{adaptive ICL} simply as ICL in the remainder of the paper.

As shown in the Table \ref{tab:baseline_domain_adaptation}, \approach achieves the highest overall average improvement of $12.34$ over LoRA, with gains computed as the geometric mean across tasks and models. We next explain the performance gaps method by method. 

\textbf{Comparison with LoRA.} Most of the gains of \approach over LoRA arise on limited-data benchmarks (BACE, BBBP, and ClinTox), where LoRA often overfits, collapsing to predictions dominated by a single class. On medium-scale tasks (ChEBI and MMLU-Pro), \approach does not consistently outperform LoRA but remains competitive, significantly narrowing the gap compared to ICL: while ICL incurs an average drop of $-17.22$ relative to LoRA, \approach reduces this gap to $-2.99$ (geometric mean across models and tasks). From a systems perspective, serving multiple domain adaptation requests with LoRA requires maintaining separate adapter parameters per task (e.g., $\sim$1B parameters in total for Mistral-7B across benchmarks in Table \ref{tab:trainable_params_client_server}), whereas \approach requires no server-side parameter storage and only modest user-side overhead (less than 100M parameters on average). This makes \approach a more scalable alternative for multi-domain deployment.

 \textbf{Comparison with ICL.} The performance improvement of \approach over ICL stems from its flexible representational capacity: by varying the number of attention heads---each with its own set of weights---the model can specialise in capturing different types of dependencies across exemplars. Figure \ref{fig:flops_c_0} further shows that across $K=1\ldots10$, ICL incurs substantially higher inference cost (in FLOPs) than \approach, due to its longer context length and the quadratic scaling of attention with respect to $K$ (refer to Appendix \ref{sec:deploymentEfficiency} for a complete efficiency evaluation in terms of memory and inference time across tasks and models). Lastly, from Table \ref{tab:order_variance}, we observe that ICL also exhibits non-zero variance when the in-context exemplars are permuted. This sensitivity is expected because it concatenates exemplars as text whose order directly influences the model's input. In Appendix \ref{subsec:order_invariance_qwen3_0.6b} we found that this sensitivity is even worse for smaller-scale foundation models.

\textit{Findings. While MHA-ESP introduces a one-time, lightweight training step, it can be performed locally by each user, enabling scalable deployment across many tasks without server-side overhead as in LoRA, while improving performance where standard ICL is insufficient.}

\begin{wraptable}{r}{0.5\textwidth}
    \centering\footnotesize
    \caption{Trainable parameter distribution in a client–server deployment when adapting a foundation model to all benchmark tasks using LoRA versus \approach.}
    \begin{tabular}{lccc}
        \toprule
        \multicolumn{2}{c}{Model + Adaptation via} & Server-side & Client-side \\
        \midrule
        \multirow{2}{*}{Qwen3-4B} & LoRA & 792.72M & 0\\
        & \approach & 0 & 53.92M\\
        \hline
        \multirow{2}{*}{Llama3-3B} & LoRA & 583.56M & 0\\
        & \approach & 0 & 69.31M\\
        \hline
        \multirow{2}{*}{Mistral-7B} & LoRA & 1.0B & 0\\
        & \approach & 0 & 87.02M\\
        \bottomrule
    \end{tabular}
    \label{tab:trainable_params_client_server}
\end{wraptable}

\subsection{Comparison with Soft-Prompting Baselines}
\textbf{Baselines.} Since our method is soft-prompt-based, we benchmark against other soft-prompting approaches to identify the most effective design for domain adaptation. We first compare against standard soft-prompting baselines: (a) \textit{Prompt Tuning (\pt)}\cite{lester2021power}, which learns a fixed, question-independent prompt shared across the entire task, and (b) \textit{Instance-Dependent Prompt Generation (\idpg)}\cite{wu2022idpg}, which conditions the soft prompt on the input question. Next, we evaluate against baselines that derive soft prompts specifically from retrieved exemplars. We adapt the architecture originally proposed in \cite{cheng2024xrag} to create two distinct variants: (c) \textit{Unified-ESP (\xrag)}, a many-to-one mapping where the entire set of exemplars is represented with a single ``unified'' soft prompt embedding, and (d) \textit{Pointwise-ESP (\xragk)}, a one-to-one mapping where each individual exemplar is encoded into its own distinct ``point'' or soft prompt embedding.

Formally, given a set of $K$ exemplars, \xrag is defined as: $\mZ\in\mathcal{R}^{d\times 1}=\textrm{MLP}(E_{x_1\oplus y_1\oplus \ldots \oplus x_K\oplus y_K})$. In contrast, \xragk is defined as: $\mZ\in\mathcal{R}^{d\times K}=\textrm{MLP}(E_{x_1\oplus y_1})\oplus\ldots\oplus \textrm{MLP}(E_{x_K\oplus y_K})$. Both methods employ a single-layer MLP to project the soft prompts from the encoder's hidden space into the foundation model’s embedding space. Finally, \approach utilises a many-to-many mapping to capture complex inter-exemplar dependencies while generating the final soft prompt representation. For a detailed illustration and differences between \textsc{ESP}-variants, refer to Appendix \ref{appendix:soft-prompt-illustration}.

\begin{figure}[t]
    \begin{minipage}[b]{0.49\textwidth}
        \centering
        \includegraphics[width=\textwidth]{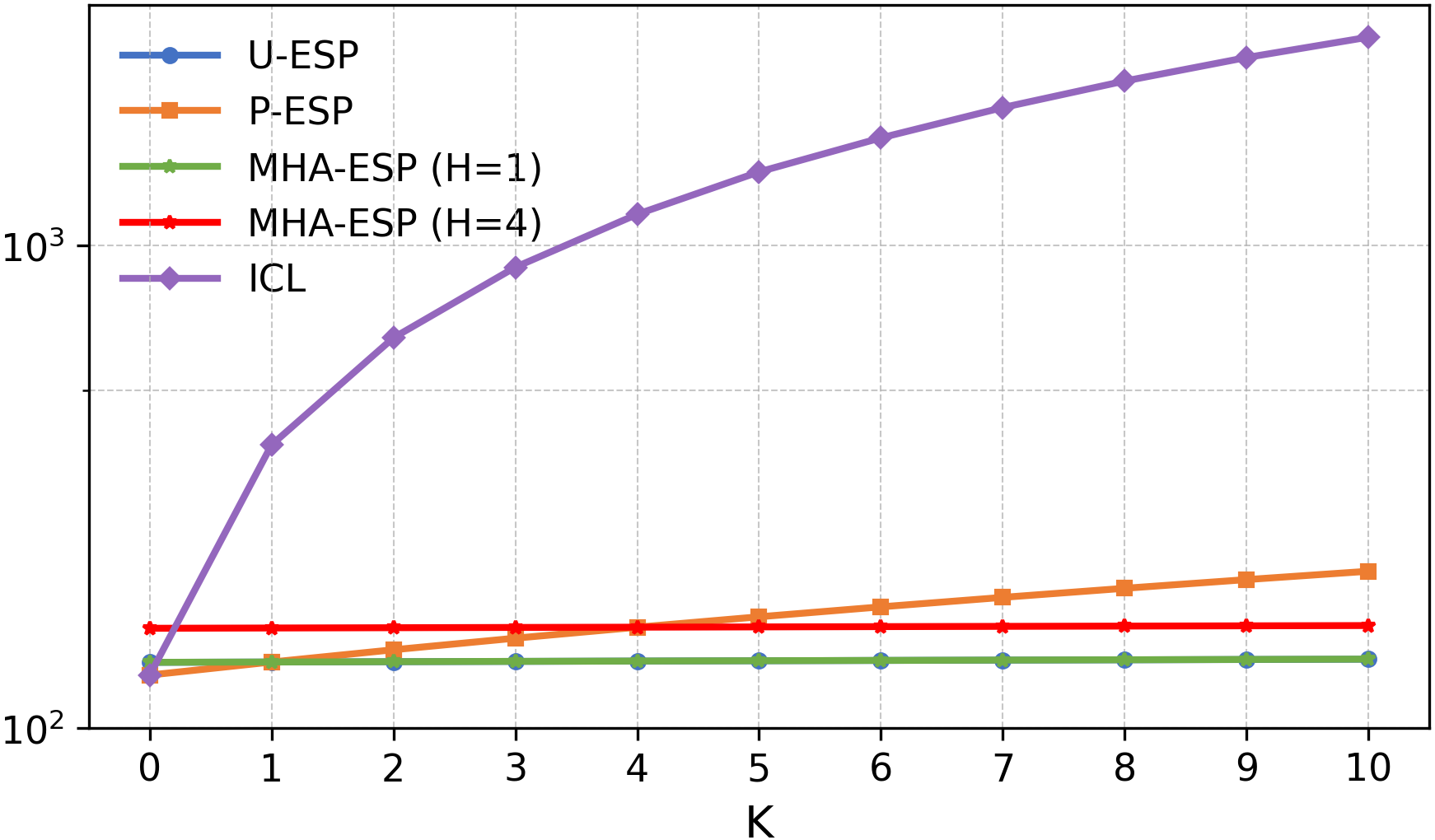}
        \captionof{figure}{Inference compute in FLOPs with $K$.}
        \label{fig:flops_c_0}
    \end{minipage}
    \hfill
    \begin{minipage}[b]{0.49\textwidth}
        \centering
        \captionof{table}{Order-(in)variance analysis for \textit{Qwen3-4B} with $K=5$: standard deviation in performance when exemplar order is randomized across 5 seeded shuffles. (Lower is better; zero means order-invariant). \textbf{Bold} shows the lowest standard deviation achieved.}
        \begin{tabular}{lcccc}
            \toprule
            \textbf{Tasks} & ICL & \xrag & \xragk & Ours\\
            \midrule
            BACE    & 3.18 & 2.80 & 1.12 & \textbf{0.0}\\
            BBBP    & 2.25 & 3.02 & 1.27 & \textbf{0.0}\\
            ClinTox & 8.52 & 0.12 & 4.91 & \textbf{0.0}\\
            \bottomrule
        \end{tabular}
        \label{tab:order_variance}
    \end{minipage}
\end{figure}

\textbf{Discussion.} From Table \ref{tab:baseline_soft_prompt}, we observe that \approach consistently outperforms both \pt and \idpg. This performance gap can be attributed to how soft prompts are constructed: \pt learns a fixed, question-independent prompt shared across the task, while \idpg conditions its soft prompt on the input question but does not leverage related training exemplars. In contrast, \approach conditions its soft prompts on retrieved exemplars most relevant to the query, enabling better utilisation of task-specific information.

\approach also outperforms \xrag and \xragk across nearly all configurations. The gap relative to \xrag likely arises from collapsing all exemplars into a single vector, leading to potentially lossy representations. While \xragk improves over \xrag by encoding each exemplar independently, all exemplars are processed with identical weights. In contrast, \approach assigns distinct weights to each head and computes representations via attention over the full set of exemplars, enabling head-specific representations.  These differences extend to efficiency and robustness: Figure~\ref{fig:flops_c_0} reports inference compute (in FLOPs) as a function of K, using \textit{ChemBERTa-2} as the encoder and \textit{Qwen3-4B} as the foundation model, where \approach with a single head matches the cost of \xrag and remains more efficient than \xragk for head counts <K. Both baselines are also sensitive to exemplar ordering (Table~\ref{tab:order_variance}): \xrag's joint encoding makes its representation depend on input order via encoding models's positional encodings, while \xragk's independent encoding determines the soft prompt's vector arrangement, producing downstream variance.

\textit{Findings. In domain adaptation tasks, MHA-ESP is the most effective soft-prompting-based method, delivering consistent performance gain across models and benchmarks.}

\begin{table*}[t]
    \centering
    \caption{Effective accuracy of \approach against ICL and \xrag under varying $K$. Results are averaged over 3 random seeds. \textbf{Bold} indicates the best performance. \underline{Underlined} indicates the value of $K$ that achieves the best effective accuracy for a given method.}
    \begin{tabular}{lccccccccc}
        \toprule
        \multirow{2}{*}{\textbf{Benchmarks}} & \multicolumn{3}{c}{ICL} & \multicolumn{3}{c}{\xrag} & \multicolumn{3}{c}{\approach}\\
        \cline{2-10}
         & \textit{K=1} & \textit{K=5} & \textit{K=10}  & \textit{K=1} & \textit{K=5} & \textit{K=10} & \textit{K=1} & \textit{K=5} & \textit{K=10}\\
         \hline
         \rowcolor{gray!20}
        \multicolumn{10}{c}{\centering{\textbf{\textit{Qwen3-4B}}}} \\
        \hline
        BACE  & 58.50 & \underline{75.87} & \underline{75.87} & \underline{61.35} & 45.35 & 34.03 & 59.89 & \underline{\textbf{76.27}} & 52.42\\
        BBBP & 67.50 & \underline{69.33} & 68.01 & \underline{73.51} & 68.69 & 70.43 & 72.12 & \underline{\textbf{82.49}} & 80.43\\
        ClinTox & 30.94 & 44.24 & \underline{54.48} & 47.64 & \underline{72.54} & 65.16 & 63.19 & \underline{\textbf{96.32}} & 89.44\\
        \hline
        \rowcolor{gray!20}
        \multicolumn{10}{c}{\centering{\textbf{\textit{Llama3.2-3B-Instruct}}}}\\
        \hline
        BACE & \underline{51.71} & 49.89 & 36.36 & \underline{39.77} & 33.31 & 16.29 & 52.37 & \underline{\textbf{67.62}} & 54.73 \\
        BBBP & 35.81 & 46.15 & \underline{52.08} & 69.80 & \underline{71.40} & 15.81 & 71.78 & \underline{\textbf{87.61}} & 86.80 \\
        ClinTox & 38.59 & 37.90 & \underline{47.27} & 67.35 & 68.22 & \underline{72.54} & 64.95 & \underline{\textbf{94.44}} & 89.33\\
        \bottomrule
    \end{tabular}
    \label{tab:varying_K}
\end{table*}

\section{Further Studies}
\label{sec:further_studies}

\textbf{Performance Sensitivity to Top-K Retrieval.} In this section, we examine how task performance varies with the number of retrieved exemplars ($K$), focusing on context saturation \cite{vladika2025influence}, the point at which adding more context introduces noise and causes performance to plateau or decline. As shown in Table \ref{tab:varying_K}, increasing $K$ generally improves effective accuracy for both ICL ($K=1 \to 5 \to 10$) and \approach ($K=1 \to 5$), reflecting the benefit of richer context. A key finding is that \approach at $K=5$ generally outperforms ICL and \xrag at their respective best-performing $K$s, indicating that it is more effective at extracting and representing information that is sufficient to answer the questions \cite{joren2024sufficient}. Moreover, it achieves this performance with fewer FLOPs than ICL (See Figure \ref{fig:flops_c_0}).
In table \ref{tab:varying_K}, we further observe that \approach reaches context saturation earlier than ICL: effective accuracy peaks around $K=5$, whereas ICL generally continues to improve up to $K=10$. Beyond these points, additional exemplars degrade effective accuracy, likely due to the inclusion of less relevant or noisy samples, consistent with findings of \cite{liu2023lost, zhao2023context, hsieh2024found}. A corresponding analysis showcasing that \approach is more robust to noisy retrieval than ICL can be found in Appendix \ref{sec:noise_robustnes}.

\textit{Findings. MHA-ESP enables more effective domain adaptation with fewer exemplars over ICL.}

\begin{figure*}[h]
    \centering
    \begin{subfigure}{0.45\textwidth}
        \centering
        \includegraphics[width=\linewidth]{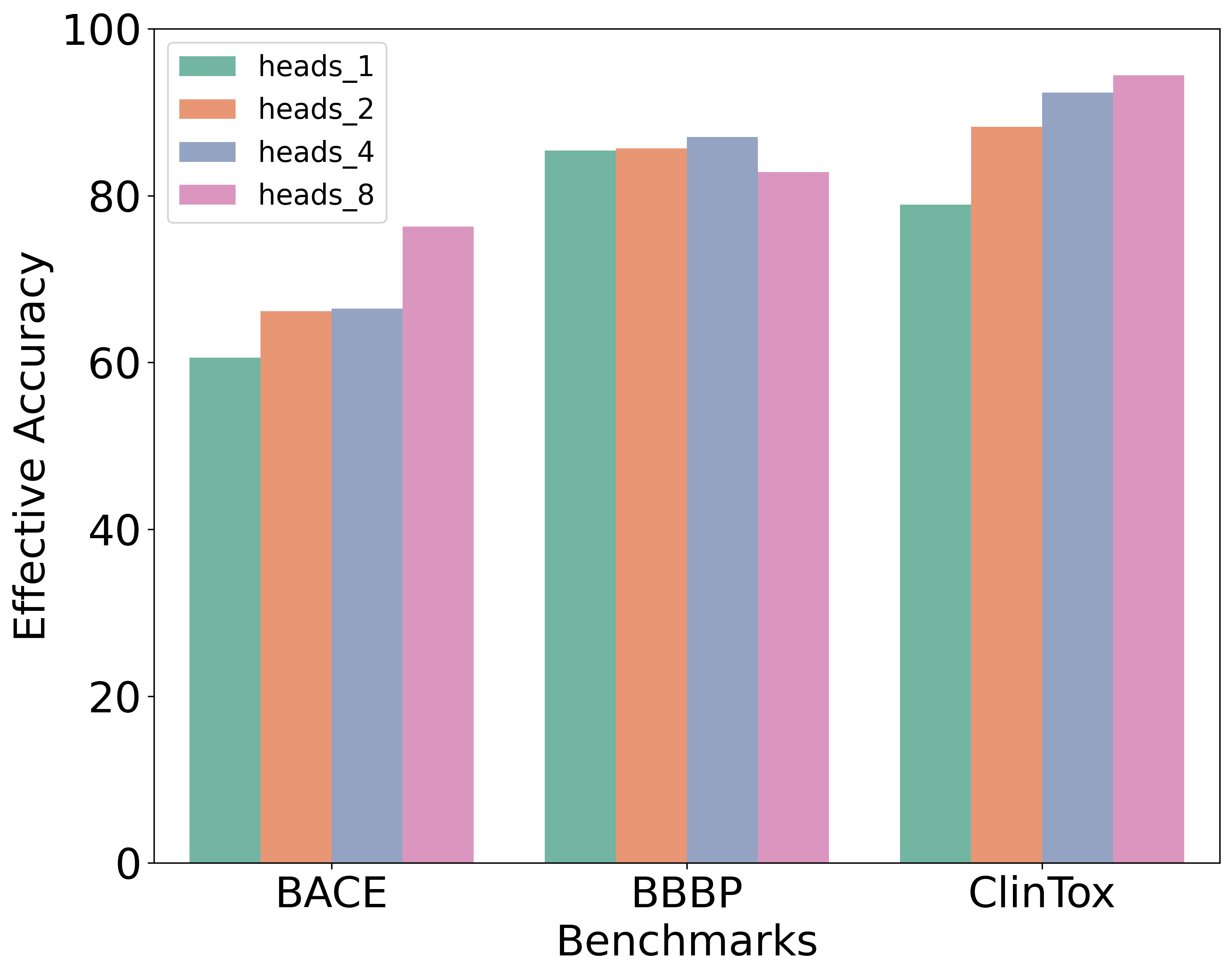}
    \end{subfigure}
    \hfill
    \begin{subfigure}{0.45\textwidth}
        \centering
        \includegraphics[width=\linewidth]{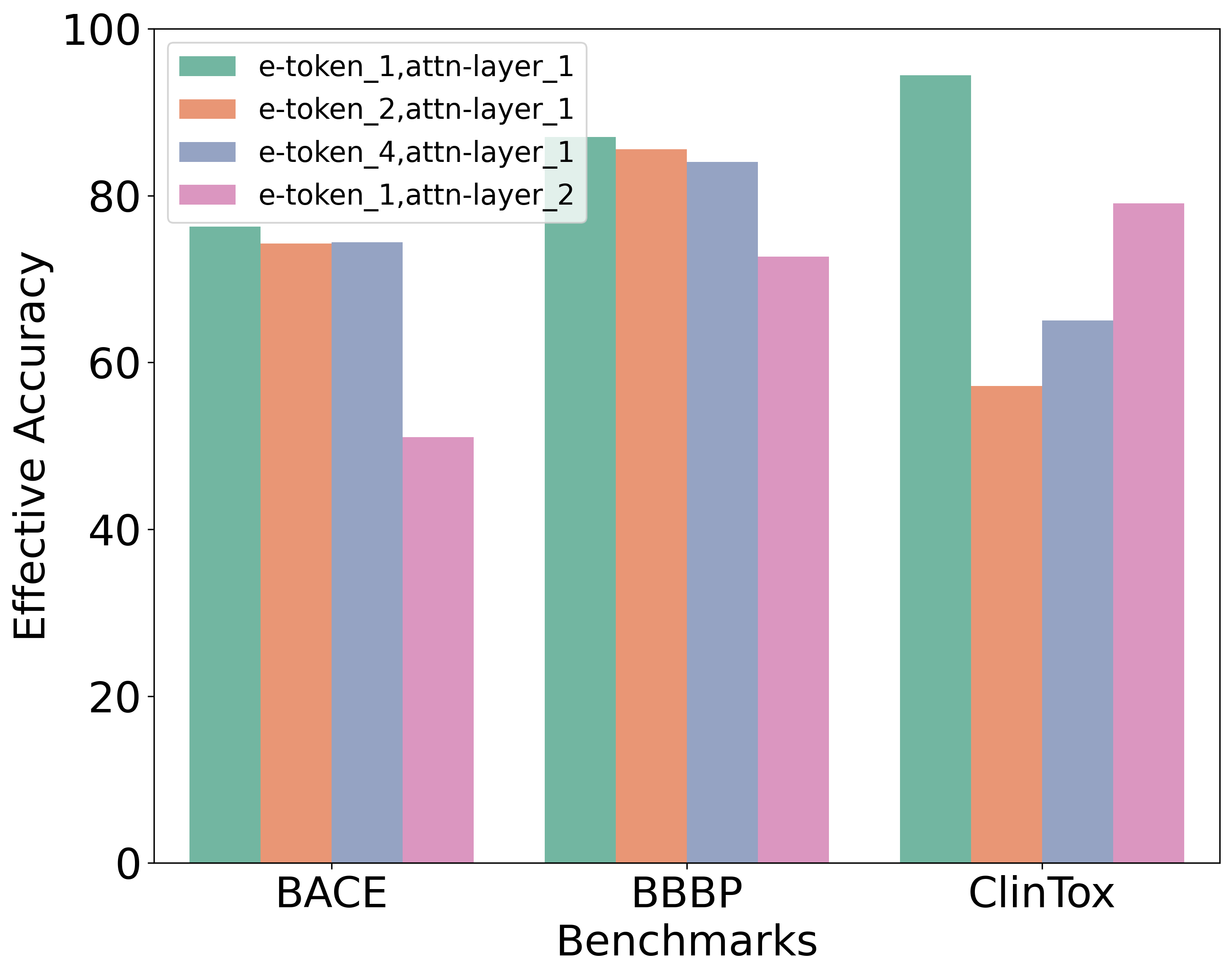}
    \end{subfigure}
    \caption{We fix the number of retrieved exemplars to five ($K=5$) and use Mistral-7B-Instruct as the foundation model. The reported performance evaluates the effect of varying: (left) the number of attention heads, (right) the number of tokens pooled from exemplars (e-tokens) and the number of self-attention layers (attn-layers) in the \approach framework.}
    \label{fig:design_space_explore}
\end{figure*}

\textbf{Design Space of \approach.} We study the design space of \approach along three axes: (1) the number of attention heads $H$, (2) the number of stacked multi-head attention (MHA) layers, and (3) the number of tokens used to represent each retrieved exemplar in the encoder-only language model. For (2), increasing the number of MHA layers corresponds to stacking multiple self-attention blocks in the computation of $g_{\varphi}(\sD_K \mid x)$, which applies self-attention iteratively over intermediate soft-prompt representations. For (3), we adopt the sliding-window chunking strategy \cite{hearst1997text} to obtain multiple tokens for representing a single exemplar.


We evaluate \approach on molecular-property-prediction tasks (BACE, BBBP, and ClinTox) using Mistral-7B-Instruct as the foundation model, with results summarised in Figure~\ref{fig:design_space_explore}. As shown there, effective accuracy improves with an increasing number of attention heads, indicating that with more heads, available contextual information is more effectively exploited as capacity is distributed across heads. Additional results are reported in Appendix~\ref{sec:more_plots_head_ablation}.

Now, we fix the number of attention heads and vary both the number of tokens per exemplar and the number of stacked MHA layers. We observe that representing each exemplar with a single token and using a single MHA layer achieves the best performance. While increasing attention depth or exemplar tokenization increases model expressiveness, we do not observe performance gain.

\textit{Findings. For the considered tasks, increasing the number of attention heads is more effective than increasing attention depth or per-exemplar tokenization.}

\begin{table*}[h]
    \centering
    \footnotesize
    \caption{Comparison of Off-the-shelf, ICL, and \approach methods across three models on in-domain and out-of-domain settings. Each cell reports mean $\pm$ standard deviation across three random in-domain/out-of-domain splits. \textbf{Bold} indicates the best performance per model.}
    \begin{tabular}{lcccccc}
    \toprule
        \multirow{2}{*}{\textbf{Method}} & \multicolumn{2}{c}{\textbf{Qwen3-4B}} & \multicolumn{2}{c}{\textbf{Llama3.2-3B-Instruct}} & \multicolumn{2}{c}{\textbf{Mistral-7B-Instruct}}\\
        \cline{2-7}
        & \textit{in-domain} & \textit{out-of-domain} & \textit{in-domain} & \textit{out-of-domain} & \textit{in-domain} & \textit{out-of-domain} \\
        \midrule
        Off-the-shelf  & 35.60$\pm$2.76  &  38.24$\pm$2.37 & 25.37$\pm$2.87 & 27.95$\pm$2.52 & 25.53$\pm$3.15 & 28.81$\pm$2.62\\
        ICL & 37.90$\pm$2.35 & 40.75$\pm$1.79 & 28.74$\pm$3.25 & 31.80$\pm$2.81 &  26.49$\pm$2.18 & 29.09$\pm$1.68\\
        \approach & \textbf{43.74$\pm$1.29} & \textbf{44.09$\pm$2.32} & \textbf{34.46$\pm$2.04} & \textbf{34.00$\pm$0.82} & \textbf{32.97$\pm$2.14} & \textbf{32.13$\pm$1.15}\\ 
        \bottomrule
    \end{tabular}
    \label{tab:mha_ood_generalisation}
\end{table*}

\textbf{Out-of-domain generalization.} In this section, we test out-of-domain generalisation of learned \approach modules. To serve the purpose, we consider the QA benchmark MMLU-Pro, which spans 14 subject domains. We partition these domains into two disjoint groups: 7 domains are used to train \approach (treated as \emph{in-domain} at test time), and the remaining 7 are held out entirely from training (\emph{out-of-domain}). To reduce sensitivity to the particular partition, we evaluate over three random splits. In Table \ref{tab:mha_ood_generalisation}, we observe that \approach consistently outperforms ICL across all foundation models in both in-domain and out-of-domain settings. This suggests that its learned multi-headed attention generalises effectively beyond training distributions, enabling robust adaptation to unseen domains. 

Refer to Appendix \ref{sec:c_within_k_exp} and \ref{sec:more_plots_head_ablation} for additional experiments.

\section{Conclusion}
\label{sec:conclusion}
In this paper, we proposed \approach, a soft-prompt-based method inspired by ICL, to use in-context exemplars to improve domain adaptation. \approach represents exemplars with soft prompts using an order-invariant architecture that avoids deployment overhead while maintaining low inference cost with stable performance. The number of attention heads serves as a tunable parameter, allowing efficient use of exemplar information and flexible control over soft-prompt generation. Compared to retrieval-augmented ICL and adapter-based fine-tuning methods, \approach offers a better balance between effectiveness, efficiency, and deployment-friendliness for domain adaptation. 

\textbf{Limitations and Future Works.} In this paper, \approach is proposed as a training-free approach that does not update the parameters of the foundation model; however, it still requires gradients to be back-propagated through the model. Future work could focus on alleviating this using derivative-free optimization (DFO) algorithms \cite{wierstra2014natural, rios2013derivative, qian2016derivative}, such as BBT gradient-free prompt tuning \cite{sun2022black, sun2022bbtv2}, for ESP-enabled Language-Model-as-a-Service (LMaaS). 


\bibliographystyle{plain}
\bibliography{neurips_paper}


\newpage
\appendix
\addcontentsline{toc}{section}{Appendix}
\startcontents 
\printcontents{}{1}{\section*{Table of Contents}} 

\newpage
\newpage
\section{Exemplar-Order Invariance}
\label{sec:order_invariance}
In this section, we argue that the soft prompt generated by \approach is invariant to the order of in-context exemplars/documents.
\newtheorem{claim}{Claim}
\begin{claim}[Exemplar-Order Invariance]\label{claim:order_invariance}
Let $\vq$ be a query vector for a given $x$ and for $k=1\ldots K$, let $\vk_k\in\mathcal{R}^d$ and $\vv_k\in\mathcal{R}^d$ be keys and value vectors derived from each retrieved example. Define the attention weights as $\alpha_k \coloneqq \frac{e^{s_k}}{\sum_{j=1}^K e^{s_j}}$ where score, $s_k \coloneqq \frac{\vq\cdot \vk_k}{\sqrt{d}}$. The attention output for each head is given by $\vz=\sum_{k=1}^K \alpha_k\, \vv_k$. Then $\vz$ is invariant under any permutation of the retrieved examples: permuting the indexing of the pairs $\{(\vk_k,\vv_k)\}_{k=1}^K$ does not change $\vz$.
\end{claim}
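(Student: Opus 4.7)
The plan is to show that under any permutation $\pi$ of the index set $\{1,\ldots,K\}$, relabeling the retrieved pairs from $(\vk_k,\vv_k)$ to $(\vk_{\pi(k)},\vv_{\pi(k)})$ leaves the head output $\vz$ unchanged. The argument decomposes into three ingredients, each essentially a statement about commutativity of finite sums applied to a different quantity in the attention formula.

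First I would note that the query $\vq$ is derived from $x$ alone and therefore does not participate in the permutation of exemplars. Since $s_k = \vq\cdot\vk_k/\sqrt{d}$ depends only on the pair $(\vq,\vk_k)$, after relabeling the score placed at position $k$ is exactly $s_{\pi(k)}$, i.e.\ the same real number that was originally attached to index $\pi(k)$. Next I would verify that the softmax normalizer is permutation-invariant: the sum $\sum_{j=1}^K e^{s_{\pi(j)}}$ equals $\sum_{j=1}^K e^{s_j}$ by commutativity of addition over the finite set $\{1,\ldots,K\}$, so the attention weight appearing at position $k$ under the permutation is exactly $\alpha_{\pi(k)}$ in the original labeling.

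Finally I would assemble the permuted head output as
\[
\vz_\pi \;=\; \sum_{k=1}^K \alpha_{\pi(k)}\,\vv_{\pi(k)} \;=\; \sum_{k'=1}^K \alpha_{k'}\,\vv_{k'} \;=\; \vz,
\]
via the bijective reindexing $k'=\pi(k)$ and, once more, commutativity of a finite sum. Because each of the $H$ heads performs an identical, independent computation on the same exemplar set (with its own $\mW^Q_i,\mW^K_i,\mW^V_i$), the full multi-head soft prompt $\mZ_{\textit{MHA}}=[\vz^{(1)},\ldots,\vz^{(H)}]$ inherits the invariance coordinate-wise.

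The main obstacle is purely notational bookkeeping: one must take care that the score $s_k$ travels with its key $\vk_k$ under permutation rather than being re-sorted by position, so that both the numerator $e^{s_{\pi(k)}}$ and the value $\vv_{\pi(k)}$ refer to the same underlying exemplar. Once this is spelled out explicitly, the proof requires no machinery beyond commutativity of finite sums and the definition of the softmax, and contrasts sharply with text-based concatenation (RAG) or position-indexed stacking of per-exemplar projections (xRAG-K), where the final representation genuinely depends on the ordering.
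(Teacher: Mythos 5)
Your proposal is correct and follows essentially the same route as the paper's own proof: you show the permuted scores are $s_{\pi(k)}$, use commutativity of the finite sum to see the softmax normalizer is unchanged so the weights become $\alpha_{\pi(k)}$, and then reindex the output sum via the bijection $k' = \pi(k)$. The only addition is your explicit coordinate-wise extension to all $H$ heads, which the paper leaves implicit.
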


\begin{proof}
Let $\pi$ be a permutation of $1\ldots K$ and consider the permuted sequence $(\vk_{\pi(1)},\vv_{\pi(1)}),\ldots,(\vk_{\pi(K)},\vv_{\pi(K)})$. For the permuted sequence, the scores and weights are  
\begin{equation*}
    s'_{k}=\frac{\vq\cdot \vk_{\pi(k)}}{\sqrt{d}} = s_{\pi(k)} \qquad \alpha'_{k}=\frac{e^{s'_{k}}}{\sum_{j=1}^K e^{s'_{j}}}
= \frac{e^{s_{\pi(k)}}}{\sum_{j=1}^K e^{s_{\pi(j)}}}
\end{equation*}
Because $\{s_{\pi(j)}:j=1,\ldots,K\}$ is a reordering of $\{s_j:j=1,\ldots,K\}$, we have $\sum_{j=1}^K e^{s_{\pi(j)}}=\sum_{j=1}^K e^{s_j}$. Therefore, $\alpha'_{k}=\alpha_{\pi(k)}$.

Now, the head's resulting output under the permuted attention is $\vz' = \sum_{k=1}^K\alpha'_{k}\, \vv_{\pi(k)}=\sum_{k=1}^K \alpha_{\pi(k)}\, \vv_{\pi(k)}$. Without loss of generality, reindex the sum by setting $j=\pi(k)$. Because $\pi$ is a bijection, $k\mapsto j$ permutes the index set $\{1,\ldots,K\}$, resulting in $\vz'=\sum_{j=1}^K \alpha_j\, \vv_j=\vz$.

Thus $\vz$ is unchanged by the permutation $\pi$. The result holds for any permutation, so the \approach's per-head output is order invariant.
\end{proof}

\subsection{Additional Results}
\label{subsec:order_invariance_qwen3_0.6b}
\begin{table*}[h]
    \centering
    \caption{Order-(in)variance analysis for \textit{Qwen3-0.6B} with $K=5$: standard deviation in performance when exemplar order is randomized across 5 seeded shuffles. (Lower is better; zero means order-invariant). \textbf{Bold} shows the lowest standard deviation achieved.}
    \begin{tabular}{lcccc}
        \toprule
        \textbf{Benchmarks} &  ICL & \xrag & \xragk & Ours\\
        \midrule
        BACE & 5.36 & 1.09 & 1.08 & \textbf{0.0}\\
        BBBP & 2.83 & 1.23 & 2.66 & \textbf{0.0}\\
        ClinTox & 29.31 & 6.34 & 3.90 & \textbf{0.0}\\
        \bottomrule
    \end{tabular}
    \label{tab:order_variance_qwen_small}
\end{table*}
We observe that, compared to Table \ref{tab:order_variance}, Qwen3-0.6B with ICL in Table \ref{tab:order_variance_qwen_small} is much more sensitive to the order of exemplars, leading to greater performance degradation. In comparison, \approach, by design, is order-invariant and independent of model scale.

\newpage
\section{Illustration: Exemplar Soft Prompt}
\label{appendix:soft-prompt-illustration}
\begin{figure}[h]
    \centering
    \includegraphics[width=\linewidth]{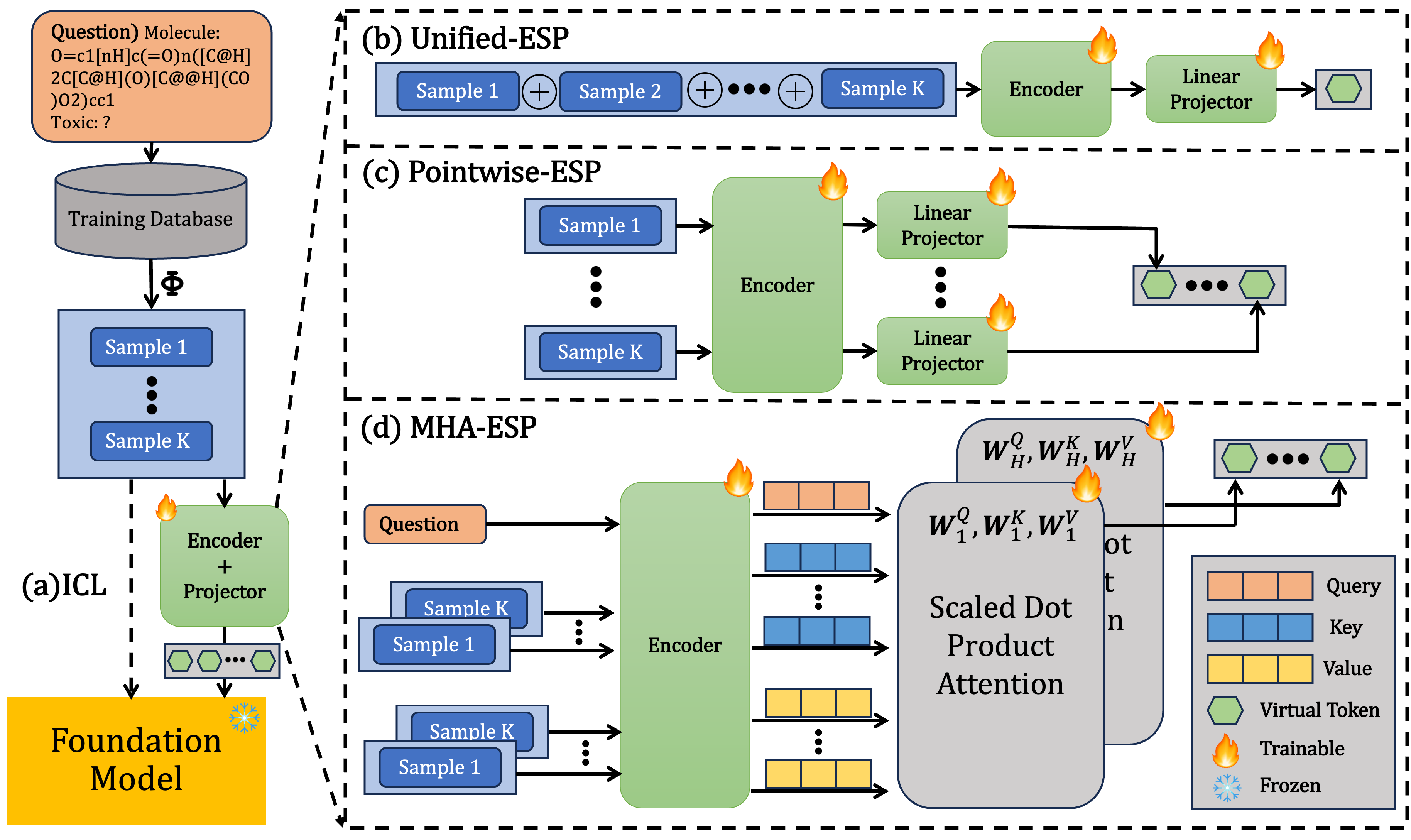}
    \caption{Comparison of exemplar-driven domain-adaptation methods: (a) ICL uses retrieved exemplars directly as context, while (b) \xrag, (c) \xragk, and (d) \approach derive soft-prompt representations from exemplars. The figure illustrates how each method forms its representation (single vector in \xrag, $K$ vectors in \xragk, multi-head representations in \approach) and highlights their trainable components. Among the four methods, only \approach---due to its use of scaled dot-product attention---is invariant to the order of exemplars.
    }
    \label{fig:mha_rag_illustration}
\end{figure}

While \xragk and \approach share the idea of representing retrieved exemplars as multiple embeddings, \approach introduces three fundamental differences:
\begin{itemize}
    \item \textbf{Joint representation vs. independent encoding.} \xragk encodes each exemplar independently via a shared MLP, and the resulting embeddings are concatenated. In contrast, \approach produces each soft prompt slot via an attention head that jointly attends over all retrieved exemplars, enabling each head to extract query-relevant information (analogous to multi-head attention in Transformers), rather than treating exemplars in isolation. This distinction is key to the improved domain adaptation performance observed in Table \ref{tab:baseline_soft_prompt}.
    \item \textbf{Flexible prompt capacity.} In \xragk, the number of retrieved exemplars (K) directly determines the number of soft prompt embeddings. \approach decouples these factors: the number of attention heads controls the soft prompt size, making it a tunable hyperparameter independent of K and offering greater flexibility in balancing performance and efficiency.
    \item \textbf{Order invariance vs. order sensitivity.} \xragk preserves exemplar order when constructing the soft prompt, leading to sensitivity to exemplar ordering (as evidenced in Table \ref{tab:order_variance}). In contrast, \approach aggregates information through attention, making it inherently order-invariant by design and thus more robust.
\end{itemize}


\newpage
\section{Benchmark: Inequality Math}
\label{sec:math_benchmark}
\begin{table*}[h]
    \centering
    \caption{Baseline comparison on \textit{IneqMath} requiring step-by-step reasoning (with $K=2$). We report the Final Accuracy, measured as the exact match between derived numerical values and ground truth. \underline{Underline} indicates the best performance.}
    \begin{tabular}{lcccc}
        \toprule
        Accuracy & Off-the-Shelf & w ICL & w LoRA & w \approach \\
        \hline
        \rowcolor{gray!20}
        \multicolumn{5}{c}{\centering{\textbf{\textit{Llama3.2-1B-Instruct}}}} \\
        \hline
        Bound & 14.0 & 4.0 & \underline{12.0} & \underline{12.0}  \\
        Relation & 12.0 & 16.0 & \underline{26.0} & 20.0  \\
        Final & 13.0 & 10.0 & \underline{19.0} & 16.0  \\
        \hline
        \rowcolor{gray!20}
        \multicolumn{5}{c}{\centering{\textbf{\textit{Llama3.2-3B-Instruct}}}} \\
        \hline
        Bound & 10.00 & 10.0 & 12.0 & \underline{14.0}\\
        Relation & 22.00 & 24.0 & \underline{26.0} & 22.0\\
        Final & 16.00 & 17.0 & \underline{19.0} & 18.0\\
        \bottomrule
    \end{tabular}
    \label{tab:pubLong_IneqMath}
\end{table*}

In this section, we investigate whether \approach can improve performance on mathematical-reasoning tasks. Specifically, we evaluate on \textit{Inequality Math} [train/dev split: 1252/100], an Olympiad-level benchmark that requires proving bounds-preserving inequalities and establishing relations between algebraic expressions \cite{sheng2025solving}. For retrieval, we compute dense representations of question-exemplar pairs using \textit{Qwen3-Embedding-8B} \cite{zhang2025qwen3}, and rank candidates by cosine similarity.

We primarily report results for \textit{Llama3.2} family, because models in the Qwen3 family already achieve high off-the-shelf accuracy (e.g., \textit{Qwen3-4B} at $65\%$), and adding retrieved exemplars led to performance drops. This observation is consistent with \cite{sheng2025solving}, which found that only certain model families benefit from in-domain exemplars. A plausible explanation is that high-performing models may have already been exposed to mathematical-reasoning tasks during pre-training, reducing the utility of retrieval.

From Table \ref{tab:pubLong_IneqMath}, we observe that \approach yields improvements over both off-the-shelf and ICL. A possible explanation is that the \textit{Inequality Math} dataset contains multiple problems that rely on the same theorems or follow similar reasoning steps. If retrieval fails to surface such structurally related exemplars, the model cannot fully benefit from exemplar conditioning. Future work could therefore focus on improving retrieval quality—e.g., by incorporating reasoning-aware similarity metrics—which may allow \approach to better exploit shared problem structure and yield stronger overall gains.

\newpage
\section{Performance benchmarking on small-scale foundation model}
\label{sec:qwen3_0.6b}
In this section, we compare the performance of Qwen3-0.6B with the proposed \approach against baselines on our set of benchmark tasks.

\begin{table*}[h]
    \centering
     \caption{Baseline comparison with domain adaptation methods, including soft-prompt based. Hyperparameters are tuned via sweeps: Prompt Tuning (virtual tokens $\in \{1, 5, 10\}$), LoRA (rank $\in \{16, 32, 64\}$), and \approach ($H\in \{1, 2, 4, 8\}$). \textbf{Bold} indicates the best effective accuracy.}
    \begin{tabular}{lcccccc}
        \toprule
        & \multicolumn{3}{c}{\textbf{Property Prediction}} & \multicolumn{2}{c}{\textbf{CHEBI}} &\\
        \cline{2-6}
        \textbf{Methods} & \textit{BACE} & \textit{BBBP} & \textit{ClinTox} & \textit{Design} & \textit{Captioning}  & \textbf{Avg.}\\
        \midrule
         Off-the-Shelf & 36.67 & 22.21 & 0.0 & 32.20 & 0.34 & 18.28\\
         w ICL & \textbf{76.01} & 66.62 & 53.30 & 61.47 & 6.50 & 52.78\\
         w PT & 0.0 & 0.0 & 0.0 & 0.44 & 0.0 & 0.09\\
         w IDPG & 53.75 & 86.52 & 94.87 & 60.74 & 2.11 & 59.60\\
         w LoRA & 0.0 & 0.0 & 0.0 & \textbf{66.50} & \textbf{20.38} & 17.38\\
         w \xrag & 40.95 & 68.81 & 31.39 & 29.34 & 4.68 & 35.03 \\
         w \xragk & 52.27 & 70.92 & 35.70 & 35.84 & 4.57 & 39.86 \\
         w \approach & 75.08 & \textbf{87.82} & \textbf{97.12} & 58.39 & 15.22  & \textbf{66.73} \\
         \bottomrule
    \end{tabular}
    \label{tab:baseline_qwen3_0.6b}
\end{table*}

\section{Robustness to noisy and adversarial exemplars}
\label{sec:noise_robustnes}
\begin{table*}[h]
    \centering
    \caption{Robustness comparison under random exemplar replacement. We report performance averaged across 3 seeded runs with Top-$K$ retrieval, Random-$K$ exemplar replacement, and the corresponding performance drop ($\Delta$) for ICL and the proposed \approach across three benchmark tasks and two foundation models.}
    \begin{tabular}{l l ccc ccc}
        \toprule
        \multirow{2}{*}{\textbf{Base Model}} &
        \multirow{2}{*}{\textbf{Task}} &
        \multicolumn{3}{c}{\textbf{ICL}} &
        \multicolumn{3}{c}{\textbf{\approach}} \\
        \cmidrule(lr){3-5} \cmidrule(lr){6-8}
        & &
        Top-$K$ & Random-$K$ & $\Delta$ &
        Top-$K$ & Random-$K$ & $\Delta$ \\
        \midrule
        
        \multirow{3}{*}{Qwen3-4B}
        & BACE & 75.87 & 54.37 & -21.50 & 76.27 & 58.31 & -17.96 \\
        & BBBP & 69.33 & 59.36 & -9.97 & 82.49 & 68.26 & -14.23 \\
        & ClinTox & 44.24 & 0.0 & -44.24 & 96.32  & 83.22 & -13.10 \\
        
        \midrule
        
        \multirow{3}{*}{Llama3.2-3B}
        & BACE & 49.89 & 52.45 & +2.56 & 67.62 & 52.49 & -15.13 \\
        & BBBP & 46.15 & 50.74 & -4.59 & 87.61 & 64.55 & -23.06 \\
        & ClinTox & 37.90 & 39.40 & -1.50 & 94.44 & 88.97 & -5.47 \\
        
        \bottomrule
    \end{tabular}
    \label{tab:robustness_simple}
\end{table*}

To assess robustness to noisy exemplars, we conducted an experiment where exemplars were randomly sampled from the database instead of selecting the top-5 most relevant ones. We observed that both ICL and \approach experience a performance drop in this setting, particularly with Qwen3-4B; however, \approach still outperforms ICL after random selection. This indicates that (1) exemplar relevance plays a critical role in effective question answering, and (2) \approach is better at representing and leveraging even randomly sampled (i.e., noisy) exemplars than ICL.

Interestingly, for Llama3.2-3B-Instruct, ICL shows a slight improvement when using random exemplars. Overall, \approach continues to outperform ICL under both random and relevant exemplar settings, further supporting its superior representational capability for skill acquisition in new domains and tasks.

\newpage
\section{Additional Details: Experimental Setup}
\label{sec:hyperParamSearch}

\subsection{Optimising number of heads in \approach}

\begin{table*}[h]
    \centering
    \footnotesize
    \caption{Choice of hyper-parameter $m$ i.e. number of heads used by \approach for experiments conducted in Table \ref{tab:baseline_domain_adaptation}, \ref{tab:baseline_soft_prompt}} 
    \begin{tabular}{lcccccc}
        \toprule
           Base Models  & BACE & BBBP & ClinTox & Design & Captioning  & MMLU-Pro\\
           \midrule
           Qwen3-0.6B & 4 & 8 & 8 & 8 & 8 & 2\\
           Qwen3-4B & 2 & 8 & 4 & 8 & 8 & 4\\
           Llama3.2-3B-Instruct & 2 & 8 & 8 & 8 & 8 & 8\\
           Mistral-7B-Instruct & 8 & 4 & 8 & 8 & 8 & 8\\
        \bottomrule
    \end{tabular}
    \label{tab:choice_of_m}
\end{table*}

\subsection{Optimising ICL}
In this section, we consider several key design dimensions to optimise ICL performance in our benchmark tasks.

\begin{table*}[h]
    \centering
    \caption{Performance variance of Qwen3-4B with ICL $(K=5)$ with respect to prompt engineering and exemplar order}
    \begin{tabular}{lccc}
        \toprule
        Design Dimension & BACE & BBBP & ClinTox \\
        \midrule
        Prompt Engineering & 77.0$\pm$1.0 & 66.0$\pm$1.0 & 44.0$\pm$0.0\\
        Exemplar Order & 70.0$\pm$3.2 & 70.0$\pm$2.3 & 41.1$\pm$8.5\\
        \bottomrule
    \end{tabular}
    \label{tab:optimise_icl}
\end{table*}

\textbf{Prompt engineering.} We evaluate Qwen3-4B (K=5) with ICL using five diverse prompts across three tasks, varying in reasoning style and domain specificity (e.g., biology-grounded explanations vs. rule-based cheminformatics reasoning), while targeting the same objective. We observe that prompt choice has a negligible impact on performance, suggesting that our results are not sensitive to prompt design.

\textbf{Exemplar ordering.} Table \ref{tab:order_variance} reports exemplar-order variance for ICL, with the corresponding averages for Qwen3-4B (K=5) reported here. While favourable orderings can yield modest gains, the improvements remain significantly smaller than those achieved by \approach. Moreover, we emphasize that sensitivity to exemplar ordering is itself a limitation of standard ICL, which \approach explicitly addresses through order-invariant aggregation.

\textbf{Text truncation/compression strategies.} Prior hard-prompt compression methods (e.g., EXIT, COCOM) operate by pruning irrelevant content from retrieved text. In our setting, the retrieved context consists of relevant input–output exemplars that do not contain redundant text to be pruned. An analogous filtering step is already performed by the retrieval function, which selects the top-K most relevant exemplars. We study the effect of context size on ICL via ablation over K in Table \ref{tab:varying_K}. 

\textbf{Retrieval quality.} To assess the impact of retrieval, refer to the experiment in Appendix \ref{sec:noise_robustnes} where exemplars are randomly sampled. Both ICL and \approach exhibit performance drops (up to 44.24), confirming that retrieval quality is critical and that our retrieval setup is meaningful and non-trivial.

\subsection{Training Details}
Training is done for 10 epochs on the limited-data molecular benchmarks (\textit{BACE}, \textit{BBBP}, \textit{ClinTox}), 4 epochs (\textit{ChEBI-Design} and \textit{ChEBI-Captioning}) and 2 epochs (\textit{MMLU-Pro}) on the medium-scale benchmarks. Learning rates are selected from $\{1e-5, 3e-5, 5e-4\}$. The execution platform used 8× V100 GPUs (32GB VRAM each). The embedding models are fine-tuned with LoRA (rank 64).

\subsection{OOD Generalisation}
For all baselines, we evaluate per-domain accuracy and average across the 7 in-domain and 7 out-of-domain categories separately. \approach is trained with $K{=}5$ retrieved exemplars, $m'{=}8$, learning rate $3\mathrm{e}{-}5$, for 2 epochs. We report results on three backbones: Qwen3-4B, Llama3.2-3B, and Mistral-7B. The in-domain training areas across the three splits considered in Table \ref{tab:mha_ood_generalisation} are:
\begin{itemize}
    \item \textbf{Split 1:} law, other, economics, health, psychology, business, history
    \item \textbf{Split 2:} math, physics, chemistry, engineering, biology, computer science, philosophy
    \item \textbf{Split 3:} math, chemistry, law, engineering, other, economics, history
\end{itemize}

\newpage
\section{Deployment Efficiency: Inference Runtime \& Memory Footprint}
\label{sec:deploymentEfficiency}

\begin{table*}[th]
    \centering
    \footnotesize
    \caption{Comparison of inference efficiency—measured by average inference time (ms) and peak memory consumption (MB)—when deploying \textit{Llama3.2-3B-Instruct}, \textit{Qwen3-4B} and \textit{Mistral-7b-Instruct} with different domain adaptation baselines on benchmark tasks. For \approach, we report the metrics corresponding to the best $m\in\{1,2,4,8\}$}
    \begin{tabular}{clccccc}
        \toprule
        Task & Metrics & Off-the-Shelf & w ICL & w \xrag & w \xragk & \approach\\
        \midrule
        \rowcolor{gray!20}
        \multicolumn{7}{c}{\centering{\textbf{\textit{Qwen3-4B}}}} \\
        \hline
        \multirow{2}{*}{BACE} & Peak Memory (MB) & 7822.7 & 8041.2 & 7832.4 & 7834.2 & 7880.4\\
         & Avg. Inference Time (ms) & 0.0551 & 0.0939 & 0.0511 & 0.0522 & 0.0536\\
        \hline
        \multirow{2}{*}{BBBP} & Peak Memory (MB) & 7807.9 & 8109.8 & 7817.1 & 7818.4 & 7865.9\\
         & Avg. Inference Time (ms) & 0.0485 & 0.0872 & 0.0490 & 0.0524 & 0.0497\\
        \hline
        \multirow{2}{*}{ClinTox} & Peak Memory (MB) & 7883.6 & 8492.4 & 7893.4 & 7895.0 & 7941.5\\
         & Avg. Inference Time (ms) & 0.0535 & 0.1046 & 0.0543 & 0.0584 & 0.0580\\
        \hline
        \multirow{2}{*}{Chebi-captioning} & Peak Memory (MB) & 7919.5 & 12223.0 & 10887.4 & 10413.3 & 10528.4 \\
         & Avg. Inference Time (ms) & 0.8571 & 0.8800 & 0.8538 & 0.9610 & 0.8581 \\
        \hline
        \multirow{2}{*}{Chebi-design} & Peak Memory (MB) & 7874.3 & 11728.3 & 10894.0 & 10415.6 & 10530.8 \\
         & Avg. Inference Time (ms) & 1.8095 & 1.4451 & 1.3964 & 1.6220 & 1.1898 \\

        \midrule
        \rowcolor{gray!20}
        \multicolumn{7}{c}{\centering{\textbf{\textit{Llama3.2-3B-Instruct}}}} \\
        \hline
        \multirow{2}{*}{BACE} & Peak Memory (MB) & 6206.0 & 6365.8 & 6215.2 & 6216.8 & 6268.67\\
         & Avg. Inference Time (ms) & 0.0411 & 0.0708 & 0.0426 & 0.0433 & 0.0429\\
        \hline
        \multirow{2}{*}{BBBP} & Peak Memory (MB) & 6196.8 & 6398.2 & 6206.2 & 6207.0 & 6259.2\\
         & Avg. Inference Time (ms) & 0.0386 & 0.0684 & 0.0402 & 0.0430 & 0.0443\\
        \hline
        \multirow{2}{*}{ClinTox} & Peak Memory (MB) & 6247.9 & 6692.8 & 6257.5 & 6259.1 & 6312.7\\
         & Avg. Inference Time (ms) & 0.0436 & 0.0744 & 0.0442 & 0.0459 & 0.0443\\
        \hline
        \multirow{2}{*}{Chebi-captioning} & Peak Memory (MB) & 6278.0 & 9439.4 & 9290.4 & 8815.5 & 8953.7 \\
         & Avg. Inference Time (ms) & 0.3683 & 0.5491 & 0.5563 & 0.5597 & 0.6348 \\
        \hline
        \multirow{2}{*}{Chebi-design} & Peak Memory (MB) & 6259.2 & 9084.5 & 9296.9 & 8818.0 & 8956.1 \\
         & Avg. Inference Time (ms) & 0.9929 & 1.0450 & 1.0830 & 0.8731 & 0.7681 \\
        
        \midrule
        \rowcolor{gray!20}
        \multicolumn{7}{c}{\centering{\textbf{\textit{Mistral-7B-Instruct}}}} \\
        \hline
        \multirow{2}{*}{BACE} & Peak Memory (MB) & 13925.1 & 14128.3 & 13935.9 & 13938.0 & 14007.0\\
         & Avg. Inference Time (ms) & 0.2455 & 0.1523 & 0.0746 & 0.0805  & 0.0801\\
        \hline
        \multirow{2}{*}{BBBP} & Peak Memory (MB) & 13910.3 & 14226.2 & 13921.2 & 13922.0 & 13990.8\\
         & Avg. Inference Time (ms) & 0.2315 & 0.1535 & 0.0678 & 0.0719 & 0.0694\\
        \hline
        \multirow{2}{*}{ClinTox} & Peak Memory (MB) & 13999.9 & 14624.3 & 14010.9 & 14012.7 & 14084.0\\
         & Avg. Inference Time (ms) & 0.2526 & 0.1720 & 0.0793 & 0.0823 & 0.0883\\
        \hline
        \multirow{2}{*}{Chebi-captioning} & Peak Memory (MB) & 14028.6 & 19011.2 & 16988.6 & 16513.2 & 16697.4 \\
         & Avg. Inference Time (ms) & 0.7827 & 0.8702 & 0.6394 & 0.6409 & 0.6593 \\
        \hline
        \multirow{2}{*}{Chebi-design} & Peak Memory (MB) & 13964.5 & 18492.8 & 16995.1 & 16515.4 & 16699.6 \\
         & Avg. Inference Time (ms) & 1.2191 & 1.3008 & 1.0964 & 0.9834 & 0.9061 \\
        \bottomrule
    \end{tabular}
    
\label{tab:deployment_efficiency}
\end{table*}

In this section, we compare the inference efficiency of a base model adapted using different training methods, focusing on average inference latency (ms) and peak GPU memory consumption (GB) measured on identical hardware (Table \ref{tab:deployment_efficiency}). All measurements are obtained with $K=5$ for all retrieval-based baselines.

Across this setting, we observe that representing retrieved exemplars via \approach consistently yields lower inference latency and reduced GPU memory usage compared to standard ICL, a trend that holds across varying numbers of attention heads. Furthermore, \approach exhibits a comparable inference computational footprint to \xrag and \xrag, while achieving superior predictive performance (see Table \ref{tab:baseline_soft_prompt}).

\newpage
\section{Experiment: Does Adding Exemplars Back Help?}
\label{sec:c_within_k_exp}
\begin{table*}[th]
    \centering
    \caption{Effect of re-inserting top-$c$ exemplars in textual form (up to $c=5$, given a fixed inference budget) into the in-context prompt while still using top-$K$ for soft-prompt computation in \approach. The $K=5$, $c=0$ column corresponds to the \approach column of Table \ref{tab:baseline_domain_adaptation}.
    \textbf{Bold} indicates the best effective accuracy.
    \underline{Underlined} indicates the value of $c$ that achieves the best effective accuracy for a given $K$.}
    \begin{tabular}{lccccccc}
        \toprule
        \multirow{2}{*}{\textbf{Benchmarks}} & \multicolumn{3}{c}{\textit{K=5}} & \multicolumn{3}{c}{\textit{K=10}}\\
        \cline{2-7}
         & \textit{c=0} & \textit{c=1} & \textit{c=5} & \textit{c=0} & \textit{c=1} & \textit{c=5}\\
        \hline
        \rowcolor{gray!20}
        \multicolumn{7}{c}{\centering{\textbf{\textit{Qwen3-4B}}}}\\
        \hline
        BACE & 76.27 & 75.27 & \underline{79.03} & 52.42 & 76.27 & \underline{\textbf{79.19}}\\
        BBBP & 82.49 & 68.56 & \underline{\textbf{88.14}} & 80.43 & 70.01 & \underline{87.5}\\
        ClinTox & \underline{96.32} & 76.63 & 94.55 & 89.44 & 70.72 & \underline{\textbf{99.04}}\\
        \hline
        \rowcolor{gray!20}
        \multicolumn{7}{c}{\centering{\textbf{\textit{Llama3.2-3B-Instruct}}}}\\
        \hline
        BACE & 67.62 & \underline{\textbf{79.96}} & 78.82 & 54.73 & 74.23 & \underline{77.75}\\
        BBBP & 87.61 & 88.99 & \underline{\textbf{89.74}} & 86.80 & 85.44 & \underline{86.94}\\
        ClinTox & 94.44 & 88.97 & \underline{\textbf{98.40}} & 89.33 & \underline{94.49} & 88.97\\
        \bottomrule
    \end{tabular}
    \label{tab:topC_within_topK}
\end{table*}
To investigate whether performance can be further improved, we augment \approach’s soft prompts with a small number of exemplars directly included in the model’s context at inference. Specifically, we add the top-$c$ retrieved exemplars alongside the soft prompts and study the effect across different $K$ values.

\begin{figure}[h]
    \centering
    \includegraphics[width=0.8\linewidth]{ 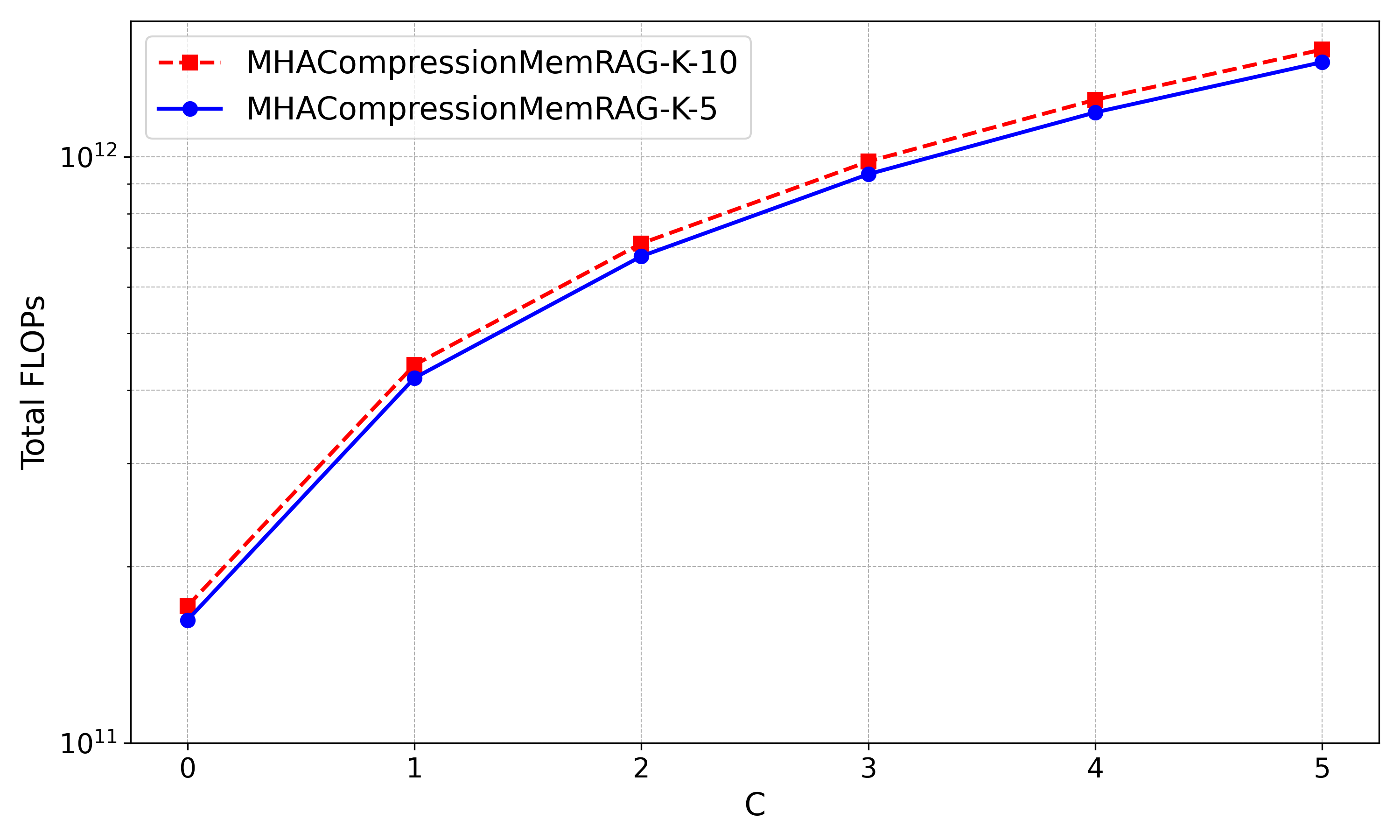}
    \caption{
    Total FLOPs for inference with encoder ChemBERTa-10M-MTR and foundation model Qwen3-4B using an increasing number of exemplars $c$ in the context, given
    $K=5$ and $K=10$ as input to create a soft prompt.}
    \label{fig:topc_within_k_ablation}
\end{figure}

As shown in Table \ref{tab:topC_within_topK}, effective accuracy improves consistently with increasing $c$, with the best results achieved at $c=5$ for both $K=5$ and $K=10$. The FLOPs analysis in Figure \ref{fig:topc_within_k_ablation} reveals that increasing $K$ from 5 to 10 incurs only a minor cost, while increasing $c$ from 0 to 5 leads to a logarithmic rise in FLOPs. These results suggest that $c$ can serve as an additional tunable knob for balancing accuracy gains against inference cost, allowing practitioners to adapt \approach to different computational budgets.

\newpage
\section{Experiment: Additional ablation with number of heads}
\label{sec:more_plots_head_ablation}
\begin{figure*}[h]
    \centering\footnotesize
    \begin{subfigure}{0.49\textwidth}
        \centering
        \includegraphics[width=\linewidth]{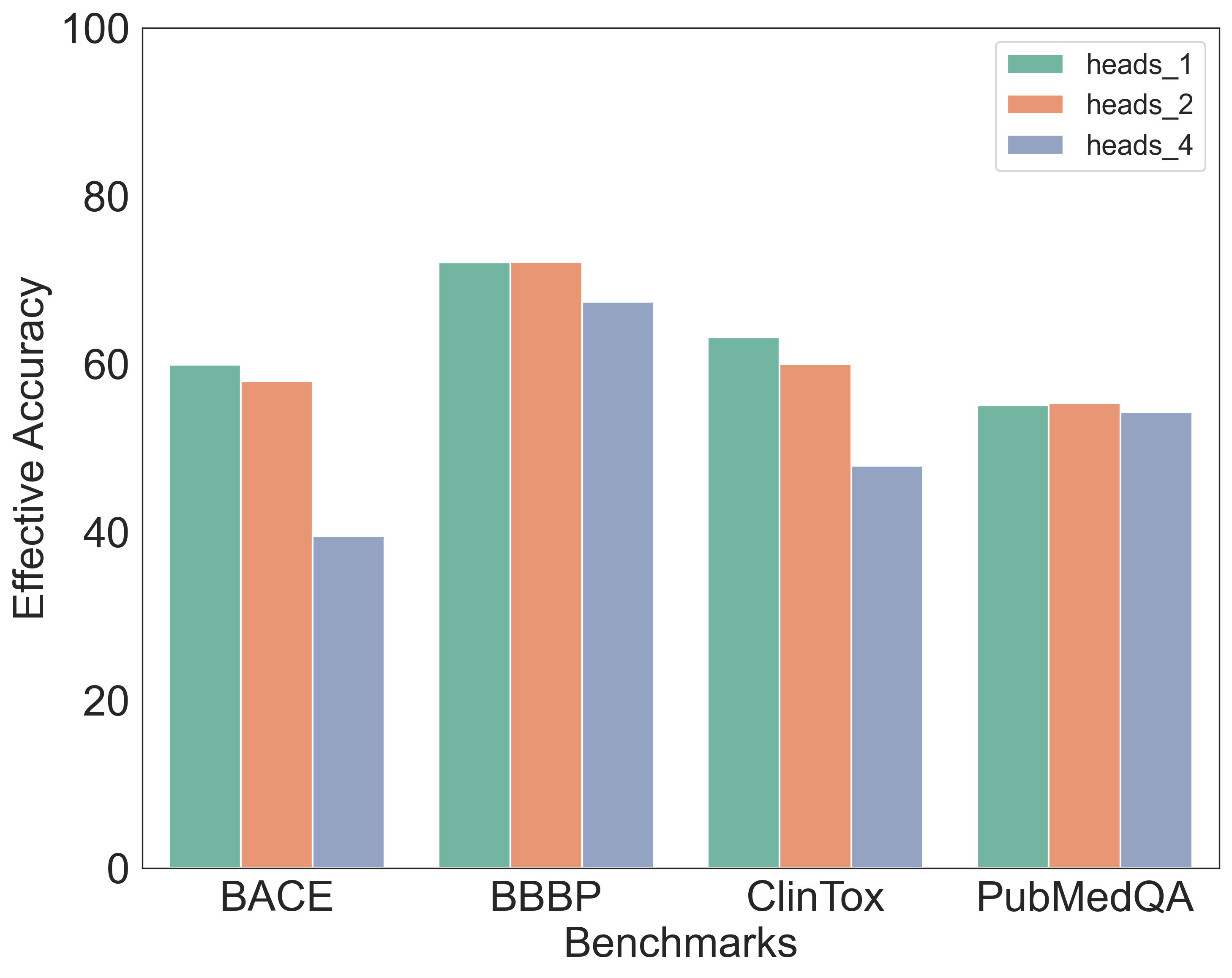}
        \caption{$K=1$, \textit{Qwen3-4B}}
        \label{fig:qwen4b_k_1}
    \end{subfigure}
    \hfill
    \begin{subfigure}{0.49\textwidth}
        \centering
        \includegraphics[width=\linewidth]{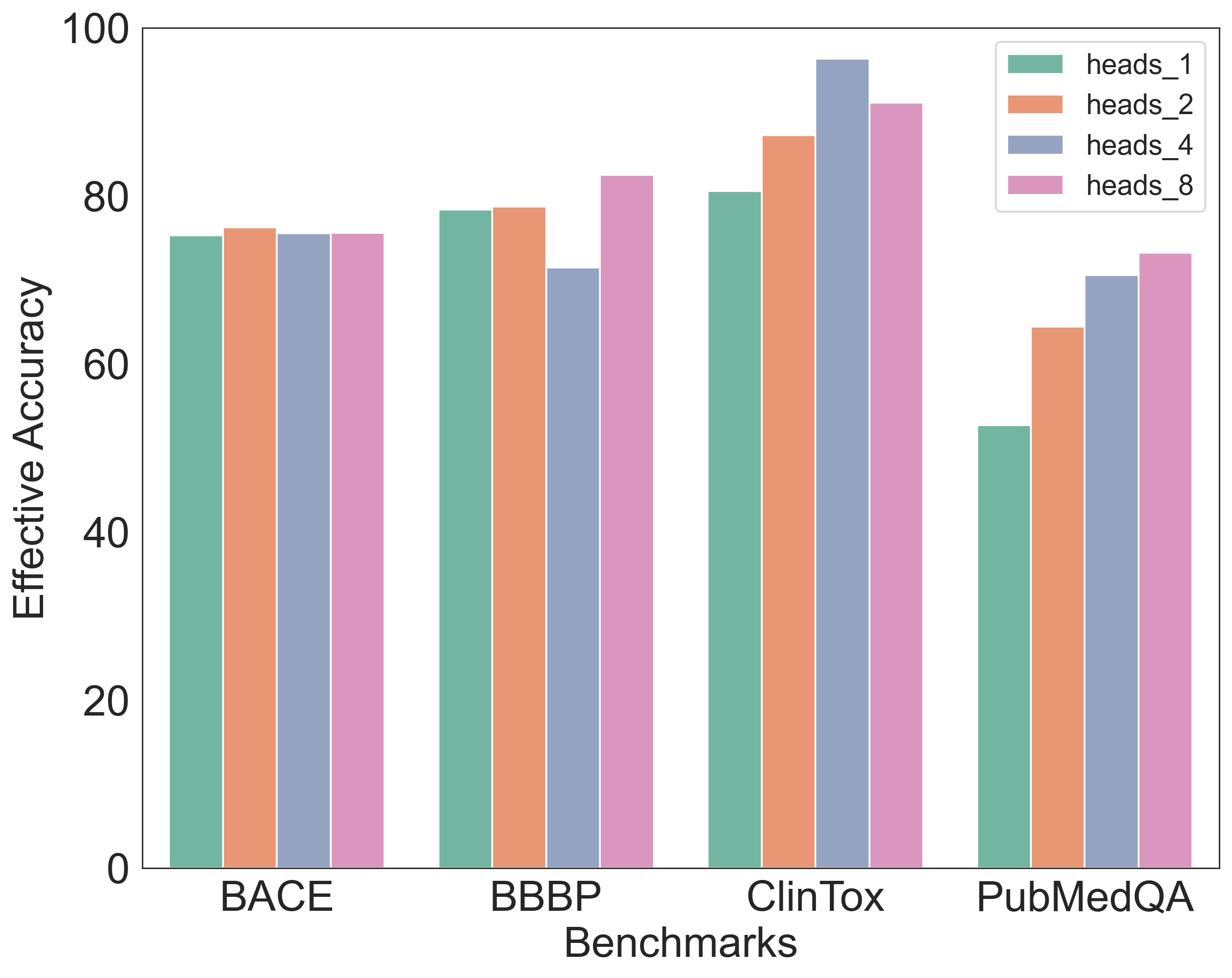}
        \caption{$K=5$, \textit{Qwen3-4B}}
        \label{fig:qwen4b_k_5}
    \end{subfigure}
    \begin{subfigure}{0.49\linewidth}
            \centering
        \includegraphics[width=\linewidth]{ 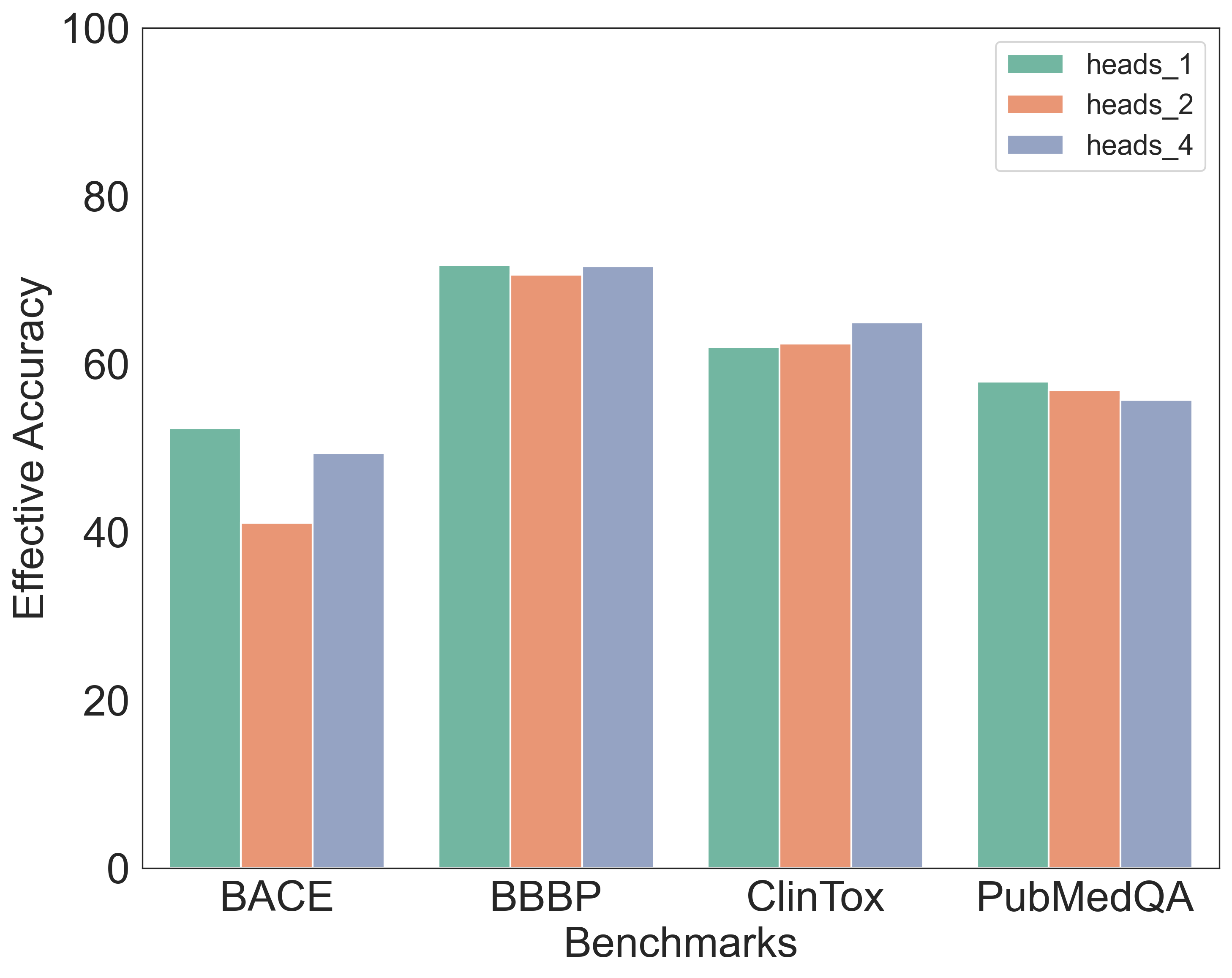}
        \caption{$K = 1$, \textit{Llama3.2-3B-Instruct}}
        \label{fig:llama3_k_1}
    \end{subfigure}
    \hfill
    \begin{subfigure}{0.49\linewidth}
        \centering
        \includegraphics[width=\linewidth]{ 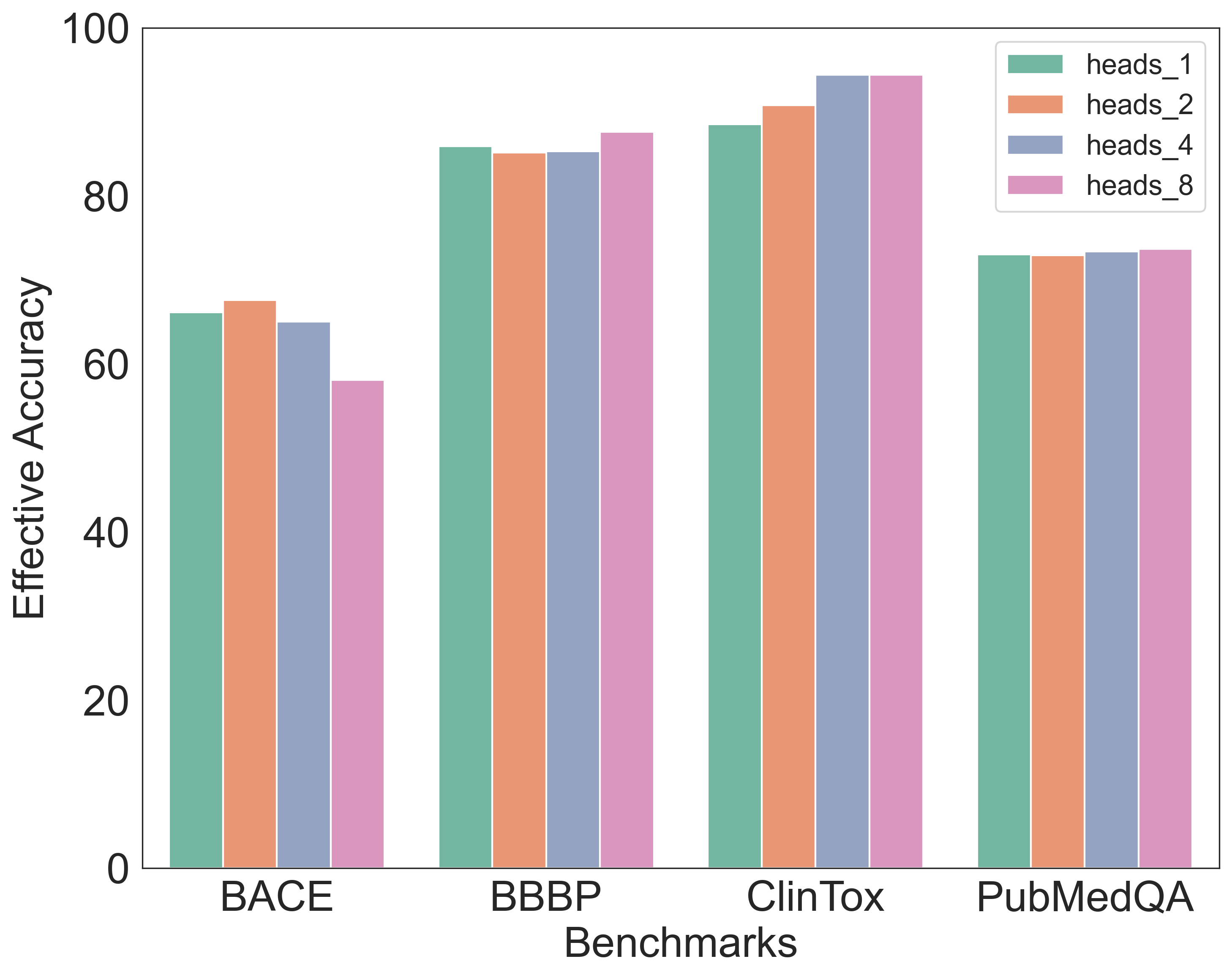}
        \caption{$K = 5$, \textit{Llama3.2-3B-Instruct}}
        \label{fig:llama3_k_5}
    \end{subfigure}
    \caption{Varying number of heads in \approach. Performance averaged across 3 seeded runs.}
    \label{fig:qwen4b_k_1_k_5}
\end{figure*}


We further study the impact of the number of heads. From Figure \ref{fig:qwen4b_k_1_k_5}, we observe that when $K=1$, increasing the number of heads does not improve effective accuracy, likely due to the limited context---only a single exemplar or document---offering little room for multiple heads to learn diverse representations. In contrast, at $K=5$, increasing the number of heads generally leads to higher effective accuracy across benchmarks, suggesting that the benefit of multiple heads emerges only when sufficient context is available.

\textit{Findings.} \approach can effectively exploit additional context with an increasing number of heads.


\section{Broader Impact}
\label{sec:broader_impact}
The significance of our work lies in its potential to reduce training and maintenance costs associated with hosting and adapting foundation models to multiple domains. Furthermore, our method enhances user privacy by enabling domain-specific customisation on the user end rather than the server end.


\newpage

\end{document}